\pdfoutput=1

\documentclass[11pt]{article}

\usepackage{EMNLP2023}

\usepackage{times}
\usepackage{latexsym}
\usepackage{tikz}

\usepackage[T1]{fontenc}

\usepackage[utf8]{inputenc}

\usepackage{microtype}

\usepackage{inconsolata}

\usepackage{subcaption}
\usepackage{graphicx}

\definecolor{mygreen}{RGB}{102,194,166}
\definecolor{myorange}{RGB}{252,141,89}
\definecolor{myblue}{RGB}{141,160,203}

%
%

\usepackage{ amssymb, amsthm, amsmath }

\usepackage{mathtools}
\usepackage{multirow}
\usepackage[linesnumbered,ruled]{algorithm2e}
\usepackage{adjustbox}
\usepackage[linesnumbered,ruled]{algorithm2e}
\newtheorem{theorem}{Theorem}

\newtheorem{definition}{Definition}

\DeclarePairedDelimiterX{\inp}[2]{\langle}{\rangle}{#1, #2}
\DeclarePairedDelimiter\abs{\lvert}{\rvert}%

\usepackage{color}

\usepackage{listings}
\lstset{
    language=Python,
    basicstyle=\small\ttfamily,
    frame=single,
    numbers=left,
    numberstyle=\tiny,
    stepnumber=1,
    numbersep=5pt,
    showstringspaces=false,
    keywordstyle=\color{blue},
    stringstyle=\color{red},
    commentstyle=\color{green!60!black},
    breaklines=true,
    breakatwhitespace=true,
    tabsize=2,
    captionpos=b,
}

\title{Locally Differentially Private Document Generation  Using  \\ Zero Shot Prompting}


\author{Saiteja Utpala \\
  Cohere For AI \\
  \texttt{saitejautpala@gmail.com} \\\And
  Sara Hooker \\
  Cohere For AI \\
  \texttt{sarahooker@cohere.com} \\\And
  Pin Yu Chen \\
  IBM Research \\
  \texttt{pin-yu.chen@ibm.com} \\ }

\begin{document}
\maketitle

\begin{abstract}

Numerous studies have highlighted the privacy risks associated with pretrained large language models. In contrast, our research offers a unique perspective by demonstrating that pretrained large language models can effectively contribute to privacy preservation. We propose a locally differentially private mechanism called DP-Prompt, which leverages the power of pretrained large language models and zero-shot prompting to counter author de-anonymization attacks while minimizing the impact on downstream utility. When DP-Prompt is used with a powerful language model like ChatGPT (gpt-3.5), we observe a notable reduction in the success rate of de-anonymization attacks, showing that it surpasses existing approaches by a considerable margin despite its simpler design. For instance, in the case of the IMDB dataset, DP-Prompt (with ChatGPT) perfectly recovers the clean sentiment F1 score while achieving a 46\% reduction in author identification F1 score against static attackers and a 26\% reduction against adaptive attackers. We conduct extensive experiments across six open-source large language models, ranging up to 7 billion parameters, to analyze various effects of the privacy-utility tradeoff. Code is avaliable at \url{https://github.com/SaitejaUtpala/dp_prompt}

\end{abstract}

\section{Introduction}

The vast amount of online text data has the potential to reveal numerous user attributes, making individuals easily identifiable \citep{rao2000can, hovy2015user, preoctiuc2015analysis}. While private information can be directly disclosed through specific phrases in the text, it can also be implicitly inferred. For instance, linguistic patterns embedded within the text can inadvertently facilitate authorship attribution \cite{kevselj2003n, shrestha2017convolutional}, leading to unintended privacy leakage.

    
    

\begin{figure}[t]
\includegraphics[width=\linewidth]{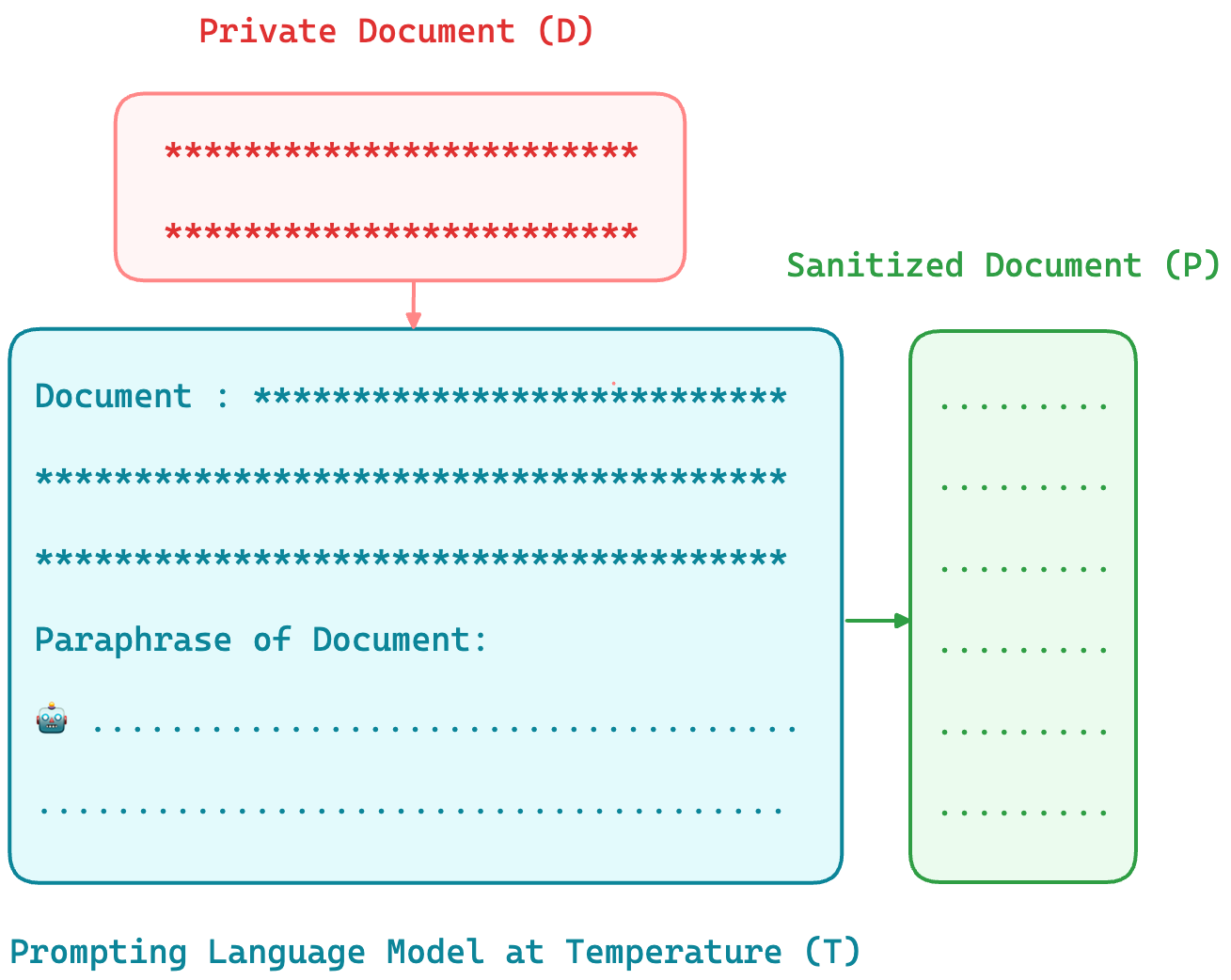}
\caption{
Overview of the proposed DP-Prompt mechanism. Given a private document (D), DP-Prompt generates a sanitized document (P) while ensuring local differential privacy. The level of privacy guarantee is controlled by adjusting the sampling temperature (T) during the decoding process.}
\label{fig:DP-Prompt}
\end{figure}

    
    


An illustrative real-world scenario is the AOL search data leak in August 2006 \cite{pass2006picture}. The incident unfolded when AOL mistakenly released detailed search logs of their users, wrongly assuming that the data had been adequately anonymized through the use of random user IDs. Unfortunately, the released logs contained sufficient personally identifiable information, leading to the identification of numerous individuals \cite{timesaol,jones2007know}. This breach of privacy triggered widespread public outcry and led to the initiation of class action lawsuits.

This case is just one among many that highlights the limitations of ad-hoc privacy approaches that may give the impression of providing privacy but ultimately fall short.  Differential privacy ({DP}) provides a rigorous treatment for the notion of data privacy by providing plausible deniability by precisely quantifying the deviation in the model's output distribution under modification of a small number of data points \citep{dwork2006calibrating, dwork2014algorithmic}. The provable guarantees offered by DP, coupled with its compelling properties such as immunity to arbitrary post-processing and graceful composability, have established it as the de facto standard for privacy. DP has witnessed widespread adoption and numerous deployments in both private \citep{erlingsson2014rappor, apple2017learning, near2018differential} and public organizations \cite{abowd2018us}. 


\begin{figure}[t]
    \centering
    \begin{minipage}{\columnwidth}
        \centering
        \begin{subfigure}{0.49\linewidth}
            \centering
            \includegraphics[width=\linewidth]{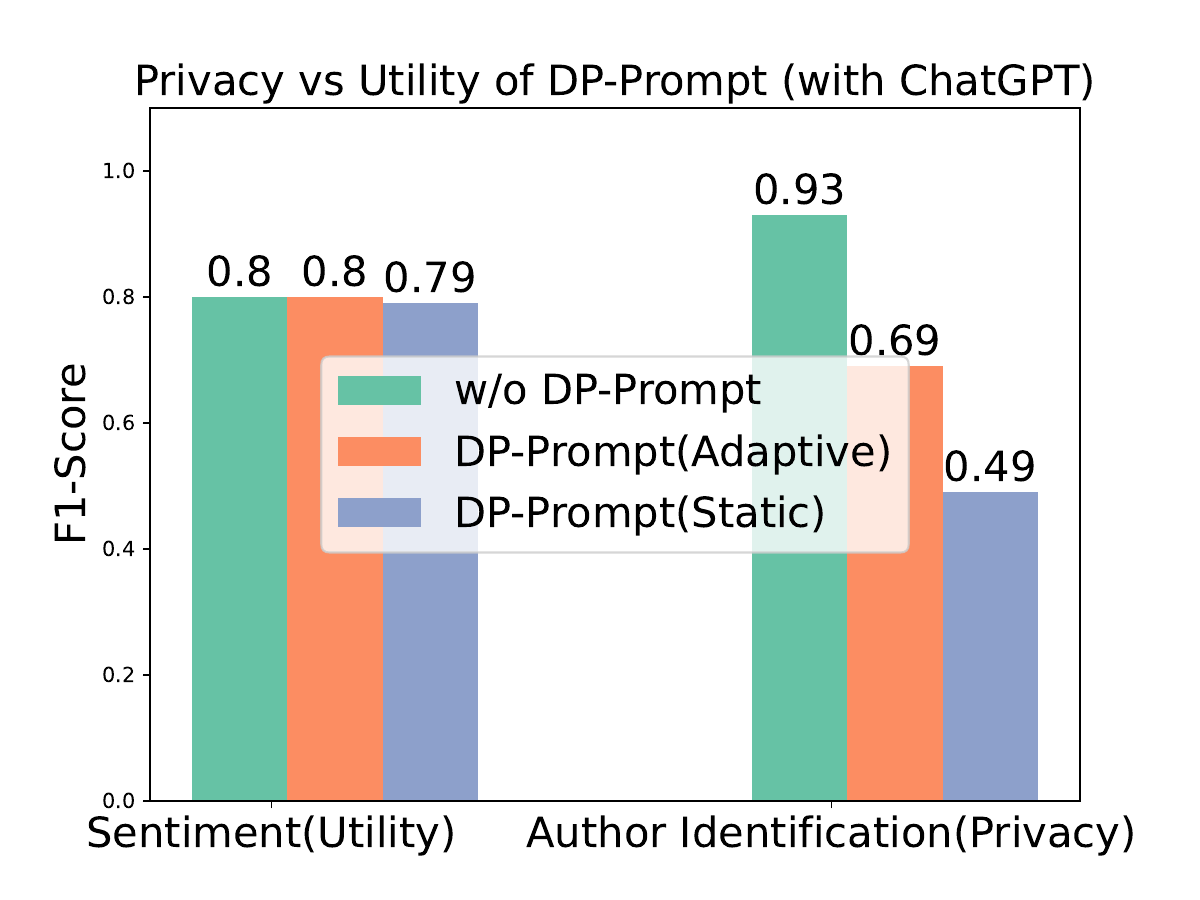}
            \subcaption{IMDB (embedding access)}
        \end{subfigure}
        \begin{subfigure}{0.49\linewidth}
            \centering
            \includegraphics[width=\linewidth]{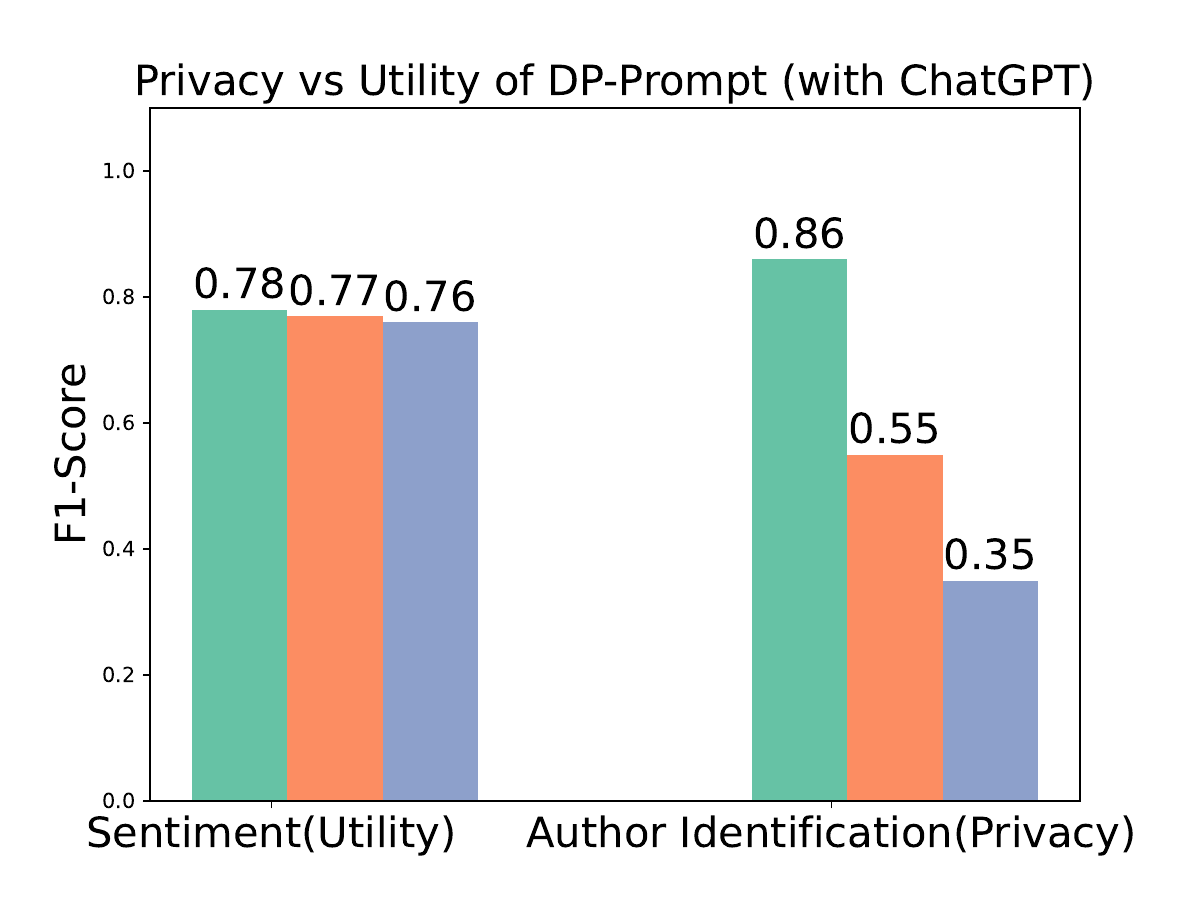}
            \subcaption{Yelp (embedding access)}
        \end{subfigure}
        \begin{subfigure}{0.49\linewidth}
            \centering
            \includegraphics[width=\linewidth]{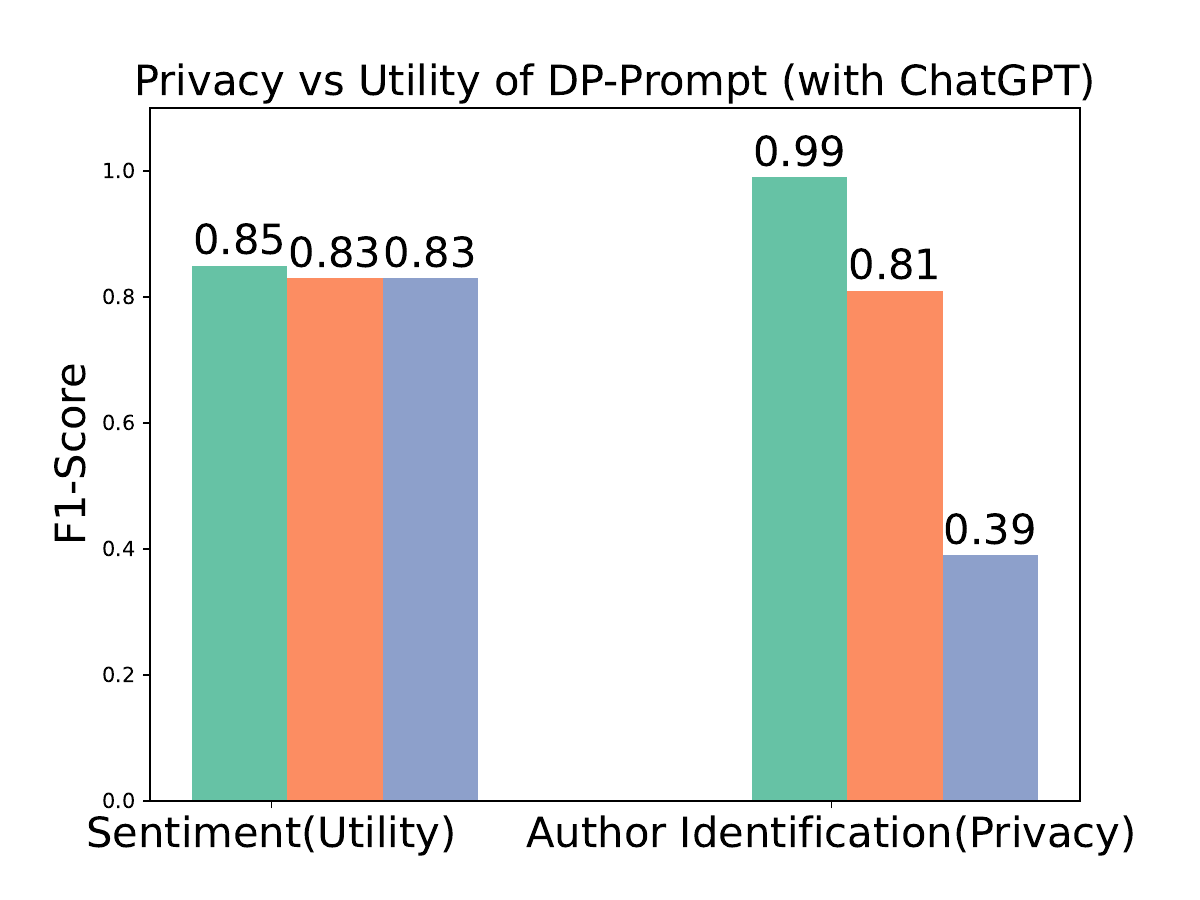}
            \subcaption{IMDB (text access)}
        \end{subfigure}
        \begin{subfigure}{0.49\linewidth}
            \centering
            \includegraphics[width=\linewidth]{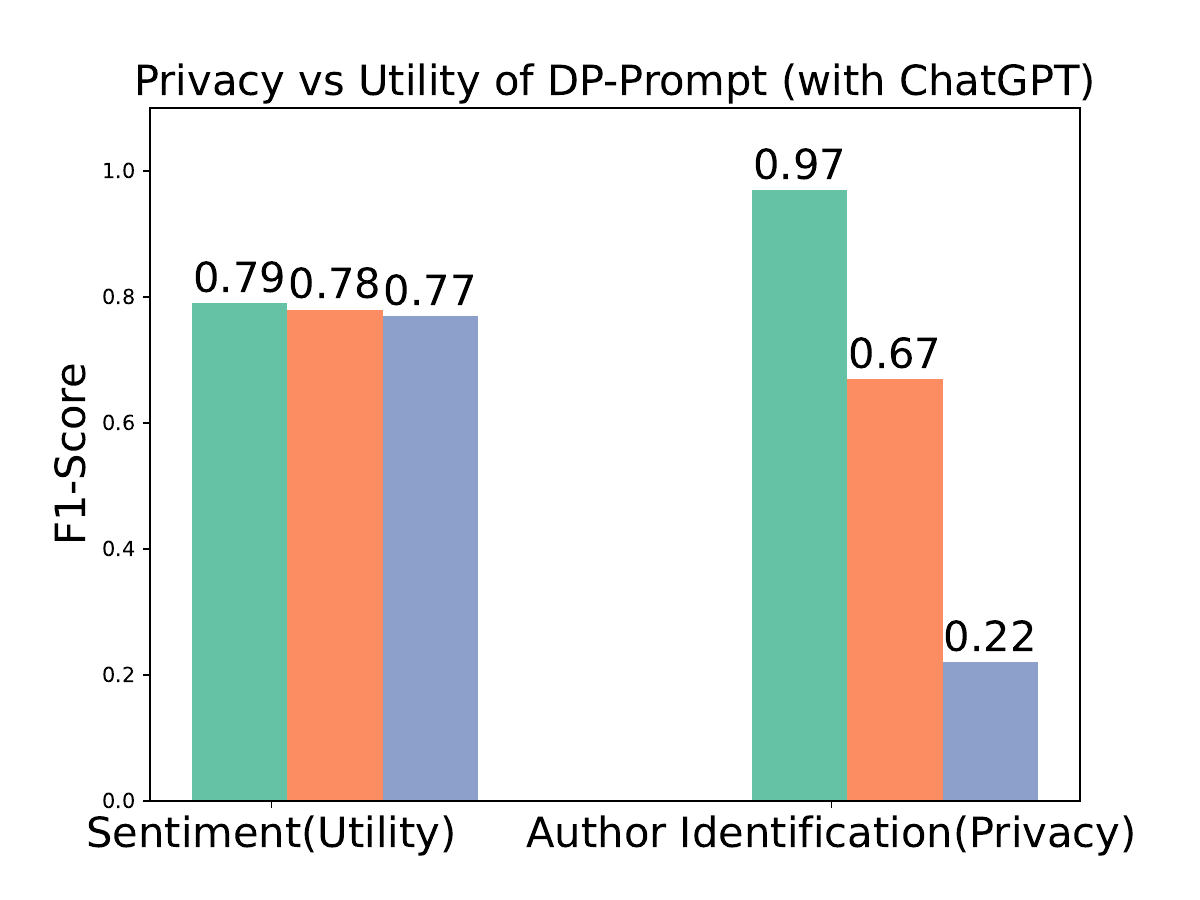}
            \subcaption{Yelp (text acccess)}
        \end{subfigure}
                \caption{Overview of the privacy-utility tradeoff with DP-Prompt (using ChatGPT) for the IMDB and Yelp datasets, conducted at a temperature of $1.5$. The terms 'Static' and 'Adaptive' refer to the attack models defined in Definition \ref{def:attack_models}.}
        \label{fig:dp_prompt_tradeoff}
    \end{minipage}
\end{figure}

To address the issue of deanonymization attacks, various approaches have been proposed within the DP framework. These approaches encompass word-level strategies \cite{feyisetan2020privacy, xu2020differentially, carvalho2021tem}  where noise is added at the word level, as well as sentence-level techniques \cite{meehan2022sentence} where noise is added at the sentence level. However, recent research by \cite{mattern2022limits} has identified limitations in word-level approaches, particularly their disregard for contextual information. To overcome these limitations, Mattern introduced a mechanism that fine-tunes the GPT-2 model \cite{radford2019language} specifically for paraphrasing tasks, resulting in the generation of sanitized versions of documents. While promising, the approach is limited by their reliance on annotated paraphrasing data, extensive computing resources for larger models, and the quality of annotations.


We propose \textbf{DP-Prompt}, a novel and straightforward solution to address deanonymization attacks. Our method leverages pretrained large language models by directly prompting them to generate paraphrases. These paraphrases are then released as sanitized documents in a zero-shot manner (See Figure \ref{fig:DP-Prompt}). Our motivation for this approach stems from two important factors. Firstly, recent research \cite{bevendorff2019heuristic, mattern2022limits} has shown that paraphrasing is a robust defense mechanism against deanonymization attacks. Secondly, growing evidence suggests that pretrained large language models can effectively tackle complex tasks without the need for task-specific and expensive fine-tuning \cite{brown2020language, chowdhery2022palm, chung2022scaling, kojimalarge, gpt4technicalreport}, through zero-shot prompting. 


By harnessing the  capabilities of pretrained large language models, DP-Prompt offers a straightforward and powerful solution to mitigate the risk of deanonymization. It provides a promising alternative that can be widely applicable, particularly in the context of on-device large language models where text completion tasks require significantly fewer resources. We summarize the contributions as follows: 



\begin{itemize}
    \item We propose DP-Prompt, a new, simple, and computationally effective differentially private (DP) mechanism designed as a defense against de-anonymization attacks. DP-Prompt takes a private document and generates a paraphrased version using zero-shot prompting. The resulting paraphrased document is then released as a sanitized document, as illustrated in Figure \ref{fig:DP-Prompt}.
    \item We demonstrate that DP-Prompt, in combination with ChatGPT (gpt-3.5), surpasses all current methods in terms of utility for any level of privacy. Our approach successfully recovers clean sentiment F1 score while significantly reducing the accuracy of author de-anonymization attacks. Refer to Figure \ref{fig:dp_prompt_tradeoff} for an overview of these results.
    
    \item To demonstrate the broad applicability of DP-prompt,  We conduct extensive experiments  with $6$ open source models ranging upto $7$ billion parameters to study the privacy-utiliy tradeoff.  
\end{itemize}



\begin{table*}[htbp!]
\centering
\resizebox{0.80\textwidth}{!}{%
 \begin{tabular}{c c c c}
 \hline
 Mechanism & Privacy Level   & Requires fine-tuning &  Generates sanitized doc  \\ [0.5ex] 
 \hline
  Madlib \citep{feyisetan2020privacy} &  Word level Metric-DP  & No & Yes  \\
  Mahanolbis \cite{xu2020differentially} &  Word level Metric-DP & No & Yes   \\
  TEM \citep{carvalho2021tem} &  Word level Metric-DP  & No & Yes  \\
 Truncated Laplace \citep{meehan2022sentence} & Sentence level Pure-DP  & No &  No     \\ 
 Deep Candidate \citep{meehan2022sentence} & Sentence level Pure-DP  & Yes & No     \\ [0.1ex]
 Paraphraser \citep{mattern2022limits} & Document level Pure-LDP  & Yes & Yes \\
 \hline 
 DP Prompt  (Ours) & Document level Pure-LDP & No  & Yes  \\ [0.1ex]
 \hline
 \end{tabular}}
 \vspace{0.3cm}
 \caption{We compare our proposed method, DP-Prompt, with related work on various factors. The "Privacy level" indicates the privacy guarantee provided by each mechanism. "Fine-tuning" denotes whether the mechanism involves fine-tuning a model as an intermediate step. The last column, "Generates sanitized doc," indicates whether the mechanism can output a fully sanitized document instead of just sanitized embeddings.}
  \label{tab:summary}
  \vspace{-2mm}
\end{table*}

\section{Preliminaries}

A mechanism $\mathcal{M} : \mathcal{D} \rightarrow \mathcal{V}$ achieves $\epsilon$-PureDP if, for all inputs $\textup{D}, \textup{D}' \in \mathcal{D}$ that differ in one element, and for all $V \subseteq \text{Range}(\mathcal{M})$, $\textup{Pr} \left[ \mathcal{M}(\textup{D}) \in V \right] \leq \exp{(\epsilon)} \textup{Pr} \left[ \mathcal{M}(\textup{D}') \in V \right]$ \cite{dwork2006calibrating}.

Metric Differential Privacy (Metric-DP) \cite{andres2013geo, chatzikokolakis2013broadening} is a relaxation of Pure-DP that applies to data represented in a general metric space. For a given distance metric $d : \mathcal{D} \times \mathcal{D} \rightarrow \mathbb{R}_{+}$, a mechanism $\mathcal{M} : \mathcal{D} \rightarrow \mathcal{V}$ achieves $\epsilon d$-MetricDP if, for any $\textup{D}, \textup{D}' \in \mathcal{D}$ and for all $V \subseteq \text{Range}(\mathcal{M})$, $\textup{Pr} \left[ \mathcal{M}(\textup{D}) \in V \right] \leq \exp{(d(\textup{D}, \textup{D}'))} \textup{Pr} \left[ \mathcal{M}(\textup{D}') \in V \right]$.


Local differential privacy (LDP) \citep{kasiviswanathan2011can, duchi2013local, xiong2020comprehensive} is a privacy framework where data is locally perturbed before transmission, considering the presence of an untrusted data collector or server. The formal definition of LDP is as follows:

\begin{definition}[PureLDP]\label{def:pure-ldp}
    A randomized mechanism $\mathcal{M} : \mathcal{D} \rightarrow \mathcal{V}$ is said to be $\epsilon$-\textup{PureLDP} if for any pair of inputs $\textup{D}, \textup{D}' \in \mathcal{D}$ and for all $V \subseteq \textup{Range}(\mathcal{M})$
     \begin{align*}
       \textup{Pr}[ \mathcal{M}(\textup{D}) \in V  ] \leq \exp{(\epsilon)} \textup{Pr}  [ \mathcal{M}(\textup{D}') \in V  ].
   \end{align*}
\end{definition}




There is a growing consensus that, despite the assurance of formal guarantees, it is imperative to subject differentially private mechanisms to robust privacy attacks that simulate strong and malicious adversaries \citep{jayaraman2019evaluating, blanco2022critical}. Such evaluation allows to effectively assess the empirical privacy provided by the mechanism in real-world scenario. To this end we define four attack models depending its adaptivity and mode of access.


\begin{figure*}[htbp!]
\centering
  \includegraphics[width=0.77\textwidth]{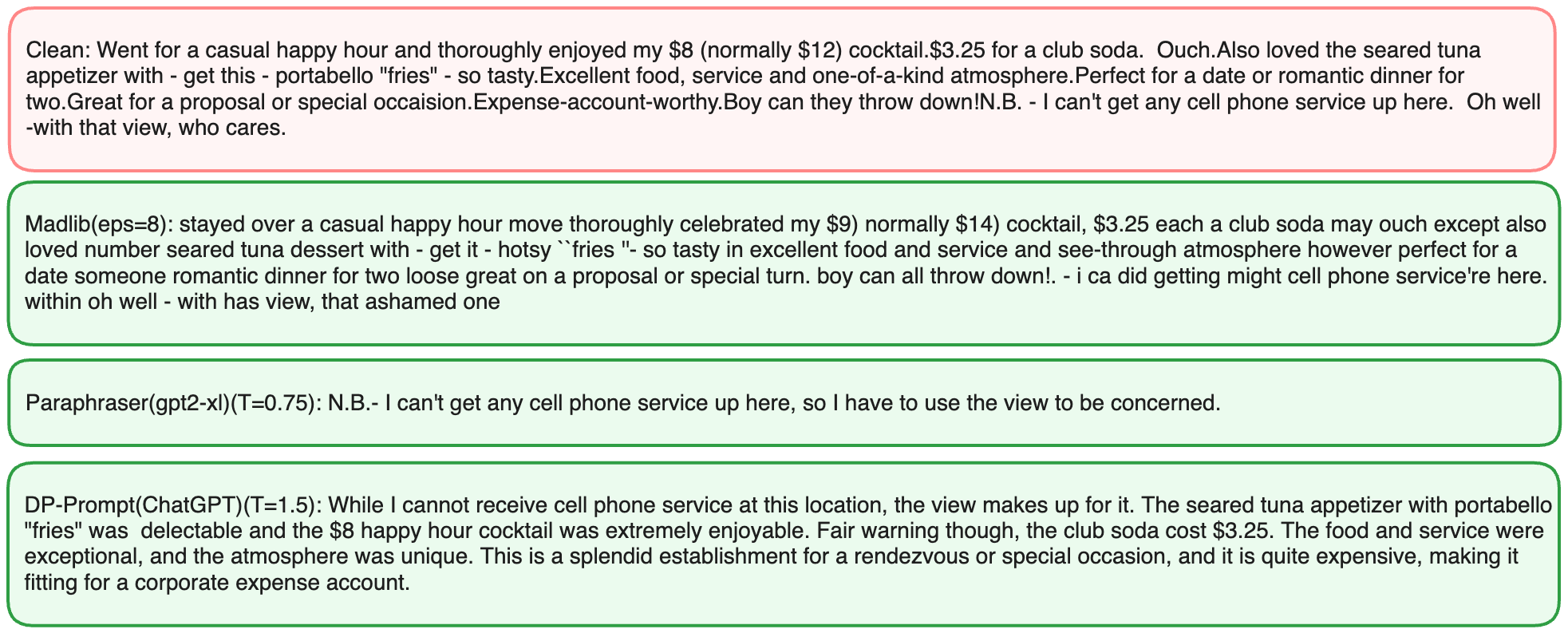}
  \caption{Sample illustration of clean and sanitized documents for various mechanisms. For ChatGPT, prompt is "Review:[review]Paraphrase of the Review:" where review is the clean review.}
\end{figure*}

\begin{definition}[Attack Models]
\label{def:attack_models}
   Consider a collection of private documents $(D_1, \dots, D_n)$ from distribution $\mathcal{D}$ with associated author identities $(a_1, \dots, a_n)$ and embeddings $(E_1, \dots, E_n) \sim \mathcal{E}$. 

For text-to-text sanitization using mechanism $\mathcal{M}_{\textup{text}}$, the sanitized documents are represented as $(P_1, \dots, P_n) \sim \mathcal{P}_{\mathcal{M_{\textup{text}}}}$. For text-to-embedding sanitization via mechanism $\mathcal{M}_{\textup{embedding}}$, the sanitized embeddings are denoted as $(N_1, \dots, N_n) \sim \mathcal{N}_{\mathcal{M_{\textup{embedding}}}}$

    \begin{itemize}
    \item \textbf{\textup{Static Attacker with Embedding Access}}: Has access to clean documents $(D_1, \dots, D_n)$ but lacks access to sanitized versions $(P_1, \dots, P_n)$.

    \item \textbf{\textup{Static  Attacker with Text Access } }: Doesn't have access to sanitized embeddings $(N_1, \dots, N_n)$ but only to the clean embeddings $(E_1, \dots, E_n).$

  \item \textbf{\textup{Adaptive Attacker with Embedding Access}} : Has access to sanitized embeddings $(N_1, \dots, N_n)$. Hence, trains a de-anonymization model to adapt to the DP mechanism $\mathcal{M}_{\textup{embedding}}$.

    \item \textbf{\textup{Adaptive  Attacker with Text Access}}: Has access to sanitized text $(P_1, \dots, P_n)$. Consequently, trains a de-anonymization model to adapt to the DP mechanism $\mathcal{M}_{\textup{text}}$.
\end{itemize}
\end{definition}

It is important to note that the adaptive attacker is a more formidable adversary since it adapts to the characteristics of the mechanism $\mathcal{M}$, whereas the static attacker only has access to clean documents/clean embeddings without any added noise.  The mode of access—either raw text or abstracted embeddings—offers further nuances, determining the exact nature of the data an attacker can exploit.




\section{DP Prompt}

Language models use a decoder network to generate sequential text output. Given a context or prompt represented by a sequence of tokens $\textup{C} = (c_1, \dots, c_{m})$, the language model generates text by sampling tokens from a conditional distribution $\textup{Pr}_{|\textup{C}}(x_1, \dots, x_{n}) = \prod_{i=1}^{n} \textup{Pr}_{|\text{C}}(x_{i}|x_1, \dots, x_{i-1})$. In this distribution, the logits $\textbf{u} \in \mathbb{R}^{\abs{\mathcal{V}}}$ are transformed using the softmax function with a temperature $T$, where $p_{ij} = \frac{\exp(\frac{u_{ij}}{T})}{\sum_{j=1}^{\abs{\mathcal{V}}} \exp(\frac{u_{ij}}{T})}$, and $\mathcal{V}$ represents the vocabulary.

This process of sequentially generating text can be regarded as a problem of selecting tokens at each step. Hence, to make the generation step differentially private, one must replace it with a differentially private version of the selection process. One commonly used and well-known differentially private mechanism is the exponential mechanism \cite{mcsherry2007mechanism}, which is defined as follows:

\begin{figure*}[t!]
  \centering
  \begin{subfigure}[b]{0.24\linewidth}
    \centering
    \includegraphics[width=\linewidth]{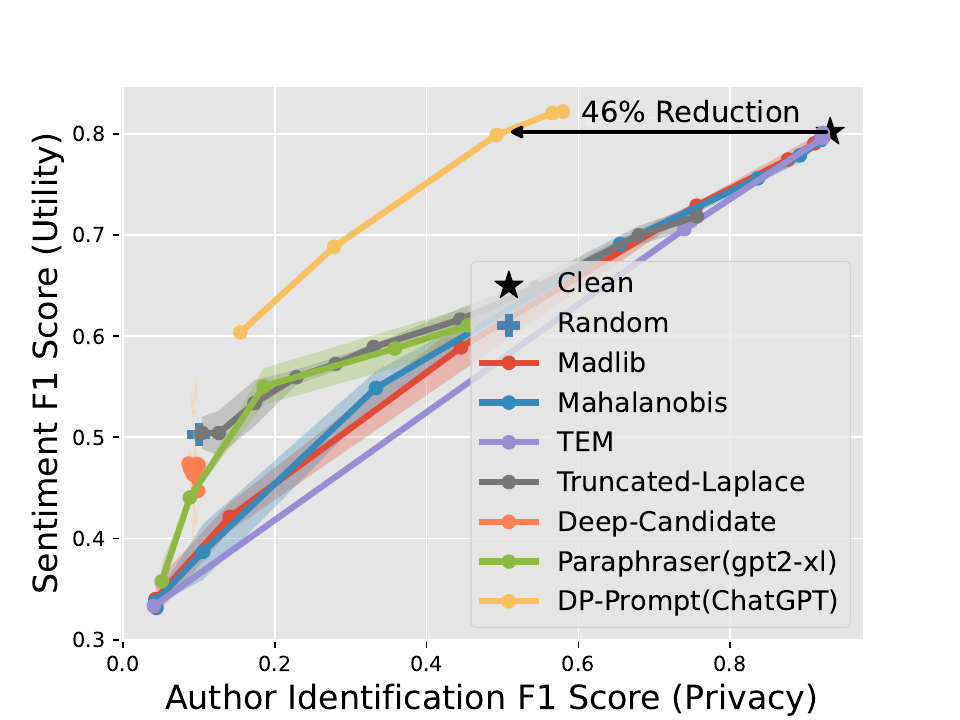}
    \subcaption{IMDB (static)}
    \label{fig:imdb_static_baseline}
  \end{subfigure}
  \begin{subfigure}[b]{0.24\linewidth}
    \centering
    \includegraphics[width=\linewidth]{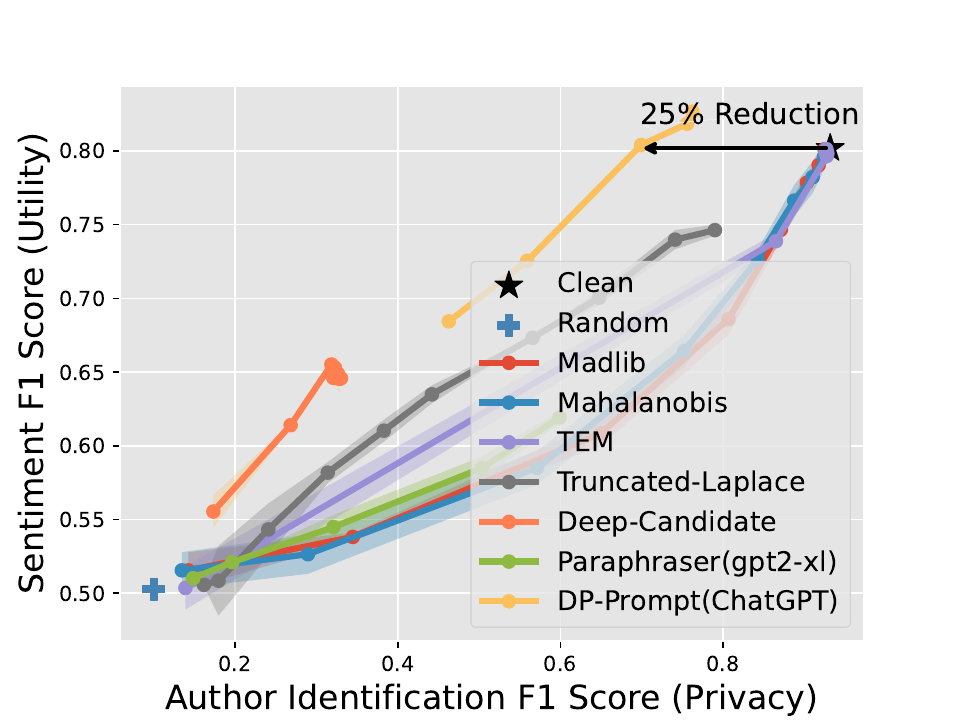}
    \subcaption{IMDB (adaptive)}
    \label{fig:imdb_adaptive_baseline}
  \end{subfigure}
  \begin{subfigure}[b]{0.24\linewidth}
    \centering
    \includegraphics[width=\linewidth]{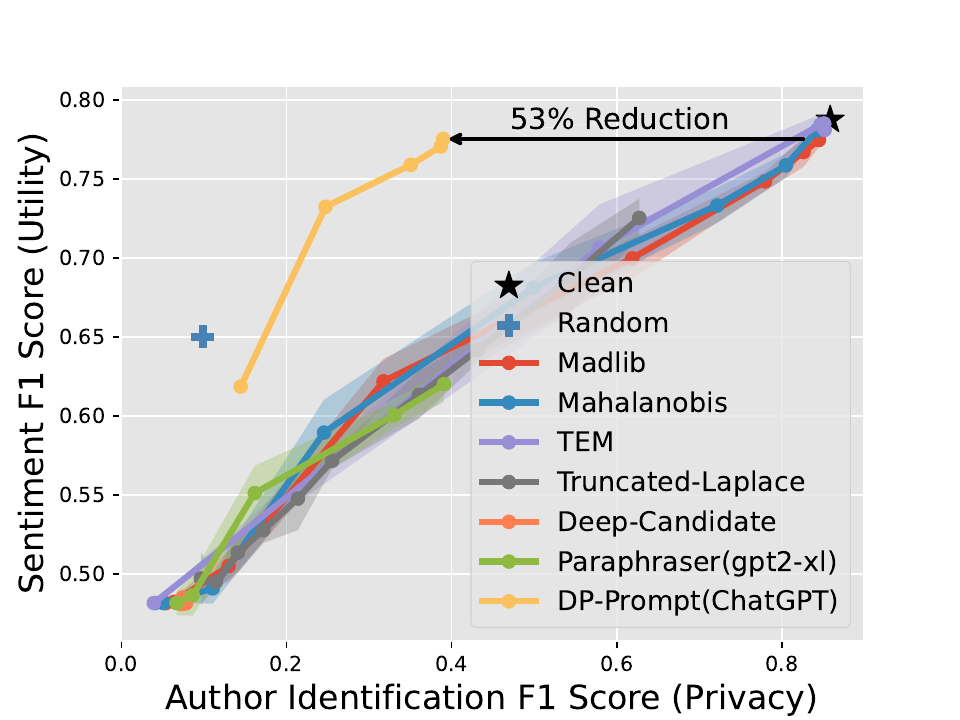}
    \subcaption{Yelp (static)}
    \label{fig:yelp_static_baseline}
  \end{subfigure}
  \begin{subfigure}[b]{0.24\linewidth}
    \centering
    \includegraphics[width=\linewidth]{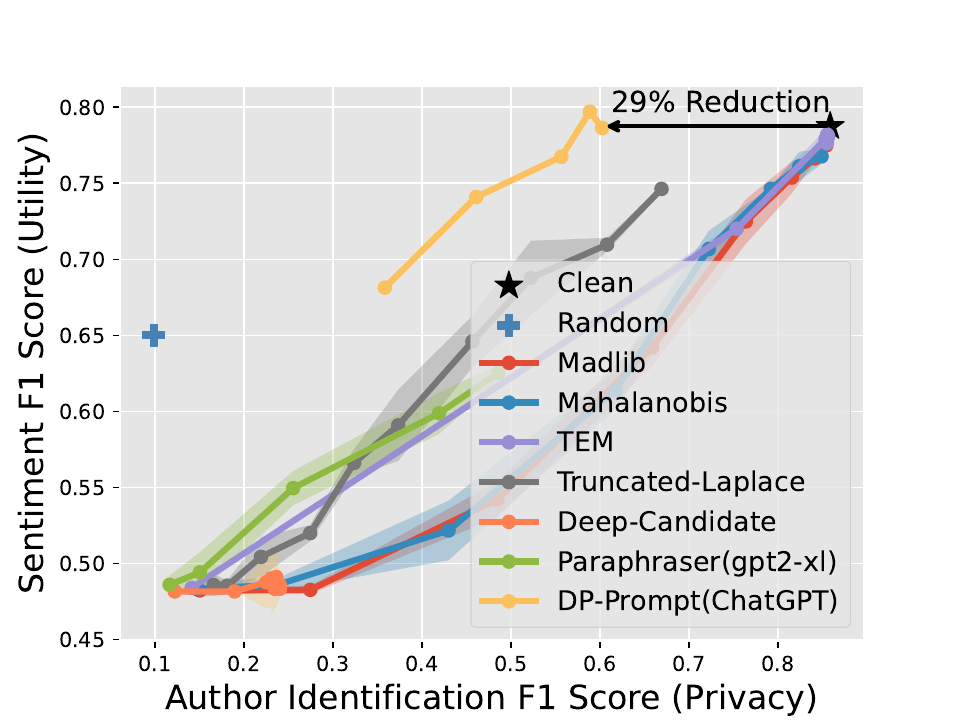}
    \subcaption{Yelp (adaptive)}
    \label{fig:yelp_adaptive_baseline}
  \end{subfigure}  
  
  \begin{subfigure}[b]{0.24\linewidth}
    \centering
    \includegraphics[width=\linewidth]{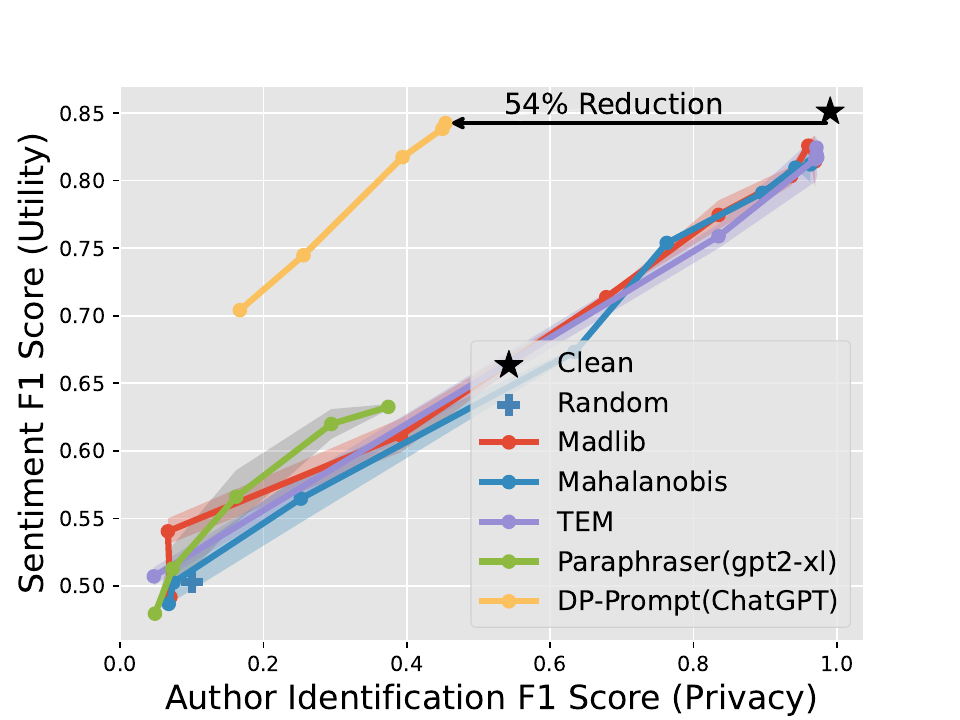}
    \subcaption{IMDB (static)}
    \label{fig:text_imdb_static_baseline}
  \end{subfigure}
  \begin{subfigure}[b]{0.24\linewidth}
    \centering
    \includegraphics[width=\linewidth]{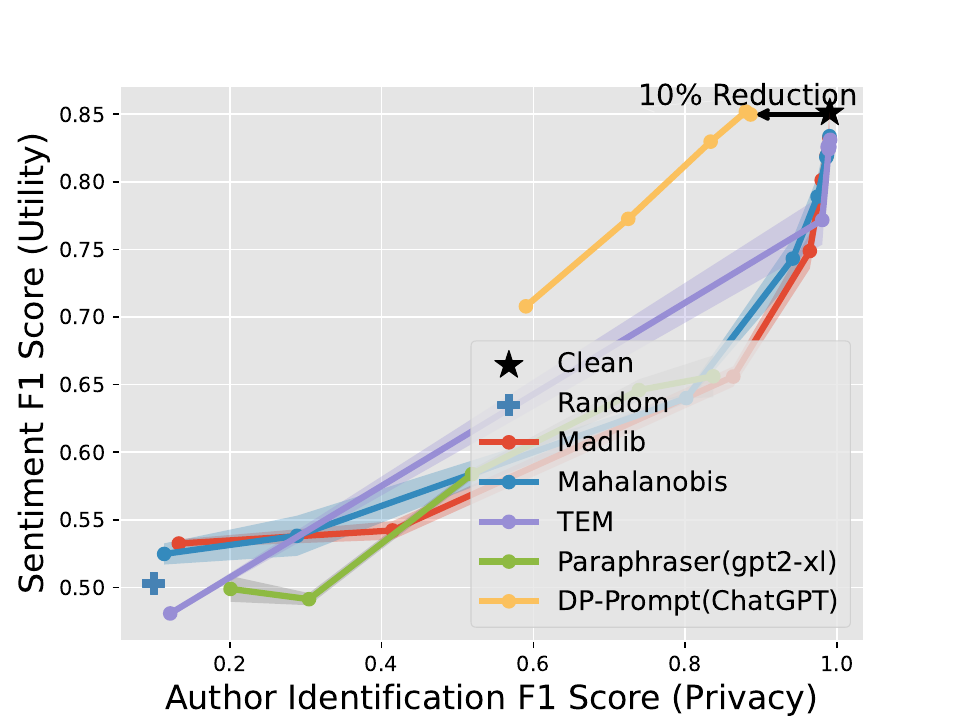}
    \subcaption{IMDB (adaptive)}
    \label{fig:text_imdb_adaptive_baseline}
  \end{subfigure}
  \begin{subfigure}[b]{0.24\linewidth}
    \centering
    \includegraphics[width=\linewidth]{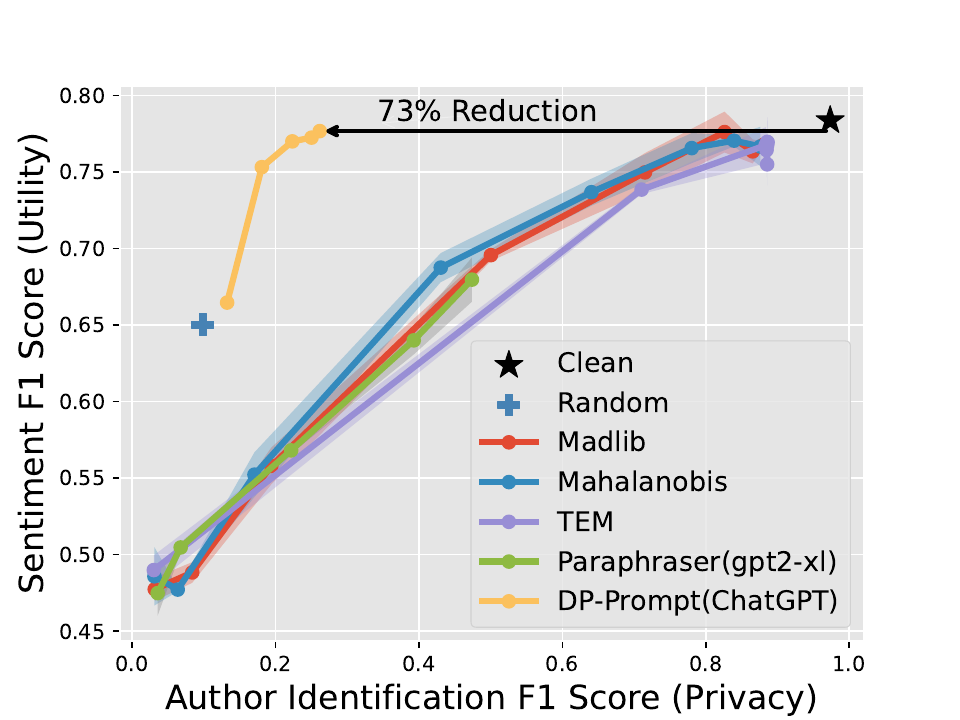}
    \subcaption{Yelp (static)}
    \label{fig:text_yelp_static_baseline}
  \end{subfigure}
  \begin{subfigure}[b]{0.24\linewidth}
    \centering
    \includegraphics[width=\linewidth]{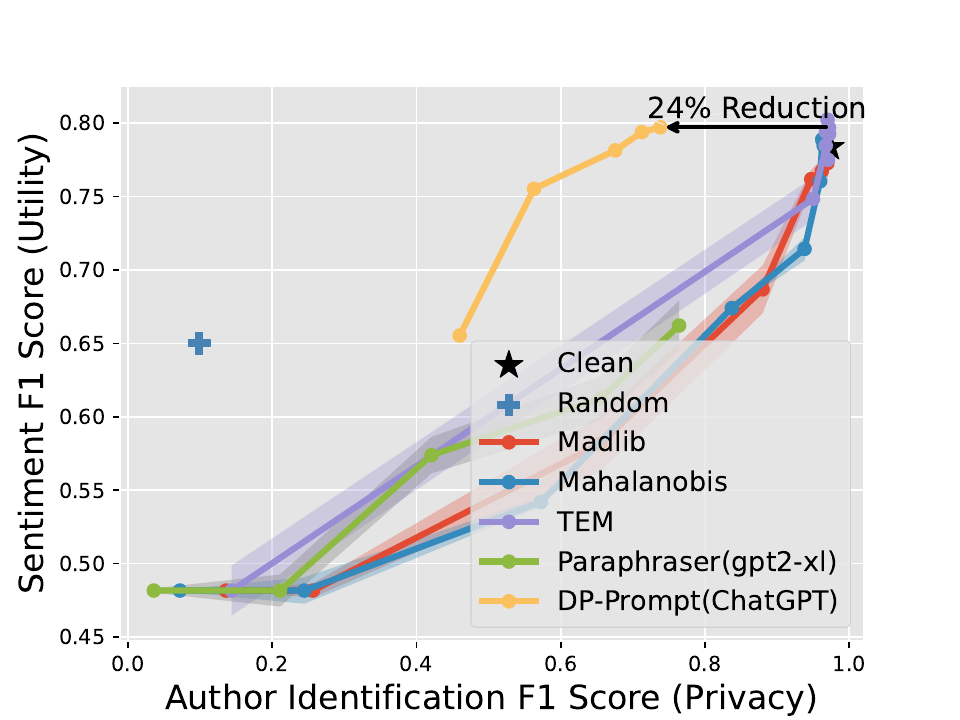}
    \subcaption{Yelp (adaptive)}
    \label{fig:text_yelp_adaptive_baseline}
  \end{subfigure}  
  \caption{Comparison of DP-Prompt (with ChatGPT) with various baselines. The top row shows results for an attacker with embedding access, while the row below presents results for an attacker with text access. Notably, it is evident that regardless of the chosen privacy level, DP-Prompt, when utilized with ChatGPT (GPT-3.5), exhibits significantly better utility compared to all baseline mechanisms.}
  \label{fig:baseline}
\end{figure*}

\begin{definition}[Exponential Mechanism] Given an utility function $\textup{u} : \mathcal{D} \times \mathcal{V} \rightarrow \mathcal{V}.$
    The exponential mechanism $\mathcal{M}_{\textup{Exp}} : \mathcal{D} \rightarrow \mathcal{V}$ is a randomized algorithm with output distribution $\textup{P}\left[ \mathcal{M}_{\textup{Exp}}(\textup{D}) = v  \right] \propto \exp{\left( \frac{\epsilon \textup{u}(\textup{D},v) }{2 \Delta u} \right)}$, where $\Delta \textup{u} = \max_{\textup{D}, \textup{D}',  v } |\textup{u}(\textup{D},v) - \textup{u}(\textup{D}', v) |$ is sensitivity.
\end{definition}

In our case, the utility of token $v_j \in \mathcal{V}$ at each step $i$ is simply the logit $u_{ij} \in \mathbb{R}$. Hence, one can make text generation differentially private using the exponential mechanism.

Extensive research has shown that paraphrasing documents helps conceal author identity \cite{rao2000can,bevendorff2019heuristic,mattern2022limits}. Considering recent advancements where tasks are formulated as prompts and language models are tasked to complete them \cite{raffel2020exploring,brown2020language,weifinetuned}, we directly prompt the language model to generate paraphrases. Therefore, given a private document $\textup{D}$ and a specific prompt template instructing the language model to generate a paraphrase, such as $\textup{T} :=$ "Paraphrase of the document:" we combine $\textup{D}$ and $\textup{T}$ to create a context $\textup{C}$. By utilizing this context, we execute the text generation procedure in a differentially private manner to produce a paraphrase. We refer to this procedure as DP-Prompt. Algorithm \ref{alg:dpprompt} outlines the specific steps of our proposed DP-Prompt, and the code for implementing DP-Prompt in HuggingFace \cite{wolf2019huggingface} is provided in Appendix \ref{sec:appendix_code}. The formal guarantee of achieving $\epsilon$-PureLDP is provided by the following theorem:
\begin{theorem}
    \label{th:LDP-Garuntee}
   Suppose the language model has not been pretrained on the private documents distribution $\mathcal{D}$. If the final logits $\textbf{u} \in \mathbb{R}^{\abs{\mathcal{V}}}$  satisfy the condition $b_1 \leq u_i \leq b_2, \forall i$, and the \textup{DP-Prompt} run  with a temperature $T$ for generating $n$ tokens, then it can be proven that the generated output satisfies $(2n(b_2-b_1)/T)$-\textup{LDP}.
\end{theorem}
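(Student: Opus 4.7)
The plan is to recognize that each decoding step of DP-Prompt is already an exponential-mechanism sample, and then to compose the per-step guarantees across the $n$ generated tokens. At step $i$, conditional on the context $\mathrm{C}$ (prompt + private document $\mathrm{D}$ + previously sampled tokens $x_1,\dots,x_{i-1}$), the next token is drawn from the softmax distribution $p_{ij} \propto \exp(u_{ij}/T)$. Matching this to the exponential-mechanism density $\exp(\epsilon\, u(\mathrm{D},v)/(2\Delta u))$ with utility equal to the raw logit identifies a per-step privacy parameter $\epsilon_{\mathrm{step}} = 2\Delta u / T$. This is the one nontrivial observation; everything else is bookkeeping.

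Next I would bound the sensitivity. Since PureLDP requires the ratio bound for every pair $\mathrm{D},\mathrm{D}'$ (not only neighbors), I take the worst case: for any two inputs and any token $v$, the assumption $b_1 \le u_i \le b_2$ gives $|u^{\mathrm{D}}_i(v) - u^{\mathrm{D}'}_i(v)| \le b_2 - b_1$, so $\Delta u \le b_2 - b_1$ and hence $\epsilon_{\mathrm{step}} \le 2(b_2-b_1)/T$. The hypothesis that the language model was not pretrained on $\mathcal{D}$ is what lets us treat the logit map $\mathrm{D}\mapsto \mathbf{u}$ as a fixed, public, deterministic function; otherwise model weights could themselves leak information about $\mathrm{D}$ and the black-box accounting would not capture it.

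Finally I would compose across the $n$ autoregressive steps by directly bounding the likelihood ratio of a full output sequence $(x_1,\dots,x_n)$:
\begin{equation*}
\frac{\Pr[\mathcal{M}(\mathrm{D}) = (x_1,\dots,x_n)]}{\Pr[\mathcal{M}(\mathrm{D}') = (x_1,\dots,x_n)]}
= \prod_{i=1}^n \frac{\Pr[x_i \mid \mathrm{D}, x_{<i}]}{\Pr[x_i \mid \mathrm{D}', x_{<i}]}.
\end{equation*}
Each factor is a ratio of two temperature-$T$ softmaxes over the same vocabulary, differing only through the two logit vectors; by the per-step exponential-mechanism analysis, each factor is bounded by $\exp(2(b_2-b_1)/T)$, independent of the common prefix $x_{<i}$. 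Multiplying across $n$ steps yields the claimed bound $\exp(2n(b_2-b_1)/T)$, and summing/integrating over any measurable $V \subseteq \mathrm{Range}(\mathcal{M})$ gives PureLDP. The only subtle point, and the one I would flag as the ``main obstacle,'' is verifying that conditioning on a shared prefix $x_{<i}$ does not inflate per-step sensitivity; this is where the uniformity of the logit bounds $b_1,b_2$ across all contexts (and hence across all possible prefixes) is essential.
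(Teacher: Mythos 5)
Your proposal is correct and follows essentially the same route as the paper's proof: bound the per-token softmax likelihood ratio (equivalently, identify each decoding step as an exponential mechanism with the logit as utility, giving $\exp\left(2(b_2-b_1)/T\right)$ per step), then compose over the $n$ tokens to obtain $2n(b_2-b_1)/T$. The only cosmetic difference is that you spell out the composition as an explicit product of conditional per-step ratios (noting the bound is uniform over prefixes), whereas the paper simply invokes the sequential composition law of DP.
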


See the Appendix \ref{sec:appendix_proof} for the proof.

\begin{algorithm}[t]
\small 
\SetAlgoLined
\DontPrintSemicolon
\KwIn{language model (LM), private document (D), prompt template (T),
clipping vector $\textbf{b} \in \mathbb{R}^{\lvert\mathcal{V}\rvert}$,
temperature $T \in \mathbb{R}_{+}$, paraphrase tokens n.}
\KwOut{Sanitized Doc (P)}
\BlankLine
\SetKwFunction{ClipAndScale}{ClipAndScale}
\SetKwFunction{ConvertToProbabilities}{ConvertToProbabilities}
\SetKwFunction{SampleTokens}{SampleToken}
\SetKwFunction{Tokenize}{Tokenize}
\SetKwFunction{Detokenize}{Detokenize}
\SetKwFunction{GeneratePrompt}{GeneratePrompt}

\BlankLine
$\textup{P} \leftarrow []$,~$\textup{C} \leftarrow \GeneratePrompt(\textup{D}, \textup{T})$\;
$\textup{C}_{\text{tokens}} \leftarrow \Tokenize(\textup{C)}$\;
\For{$i \leftarrow 1$ \KwTo $n$}{
  $\textbf{u} \leftarrow \text{LM}(\textup{C}_{\text{tokens}})$\;
  $\textbf{u}' \leftarrow \ClipAndScale(\textbf{u}, \textbf{b}, T)$\;
  $\textbf{p} \leftarrow \ConvertToProbabilities(\textbf{u}')$\;
  $v \leftarrow \SampleTokens(\textbf{p})$\;
  $\textup{P} \leftarrow \textup{P} \cup [v]$, $\textup{C}_{\text{tokens}} \leftarrow \textup{C}_{\text{tokens}} \cup [v]$\;
}
$\textup{P} \leftarrow \Detokenize(\textup{P})$
\Return $\textup{P}$
\caption{DP-Prompt}
\label{alg:dpprompt}
\end{algorithm}

\section{Experiments}\label{sec:experiments}


\subsection{Experiment Setup}
\label{subsec:exp_setup}



\noindent \textbf{Evaluation:}  Note that we are comparing DP-mechanisms with different levels of differential privacy. Therefore, in our experiments, we focus on evaluating the empirical privacy rather than the theoretical privacy$(\epsilon)$ for effective and realistic assessment. As a result, we plot the author identification F1 score, which is calculated by conducting de-anonymization attacks  on the sanitized documents. This score indicates the potential for privacy breaches. On the other hand, the y-axis represents the sentiment F1 score, which measures the utility of the sanitized documents.   


\begin{figure*}[htbp!]
  \centering
  \begin{subfigure}[b]{0.24\linewidth}
    \centering
    \includegraphics[width=\linewidth]{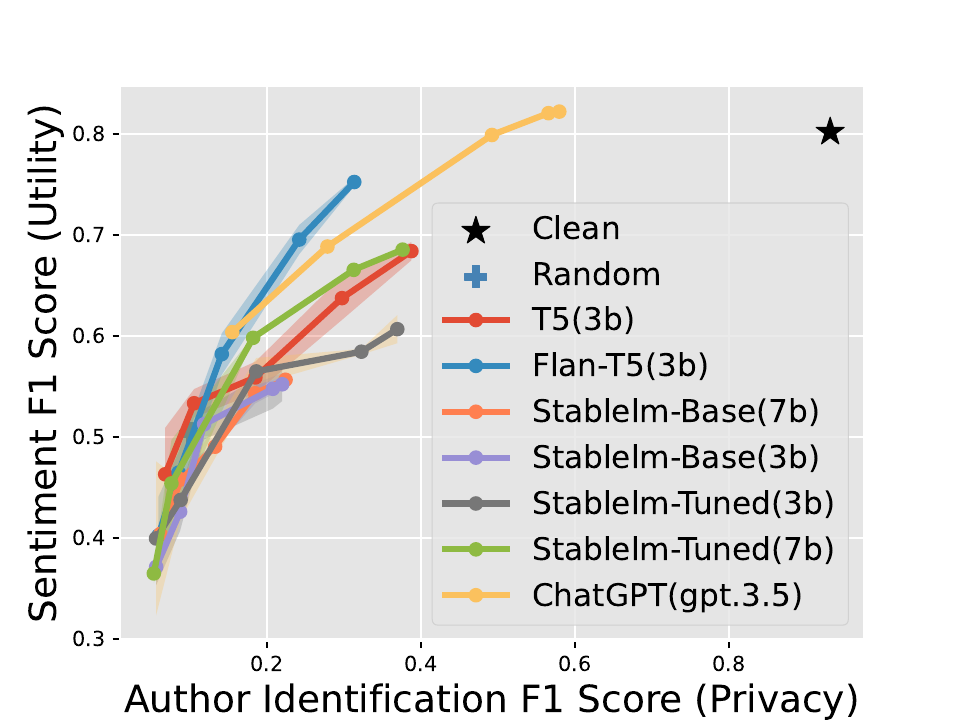}
    \subcaption{IMDB (static)}
    \label{fig:imdb_static}
  \end{subfigure}
  \begin{subfigure}[b]{0.24\linewidth}
    \centering
    \includegraphics[width=\linewidth]{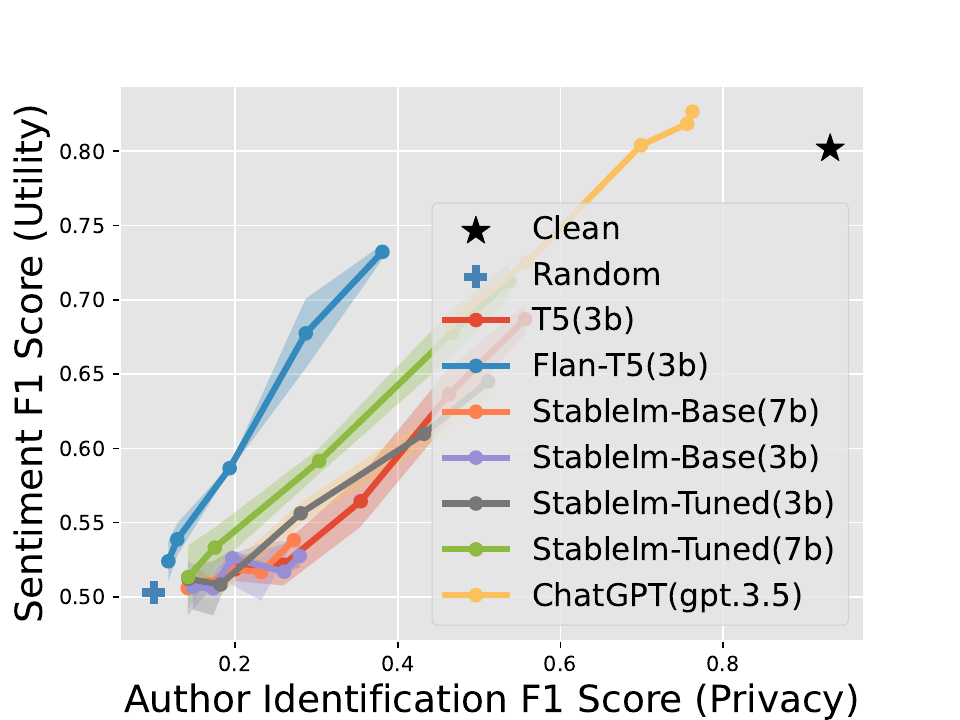}
    \subcaption{IMDB (adaptive)}
    \label{fig:imdb_adaptive}
  \end{subfigure}
  \begin{subfigure}[b]{0.24\linewidth}
    \centering
    \includegraphics[width=\linewidth]{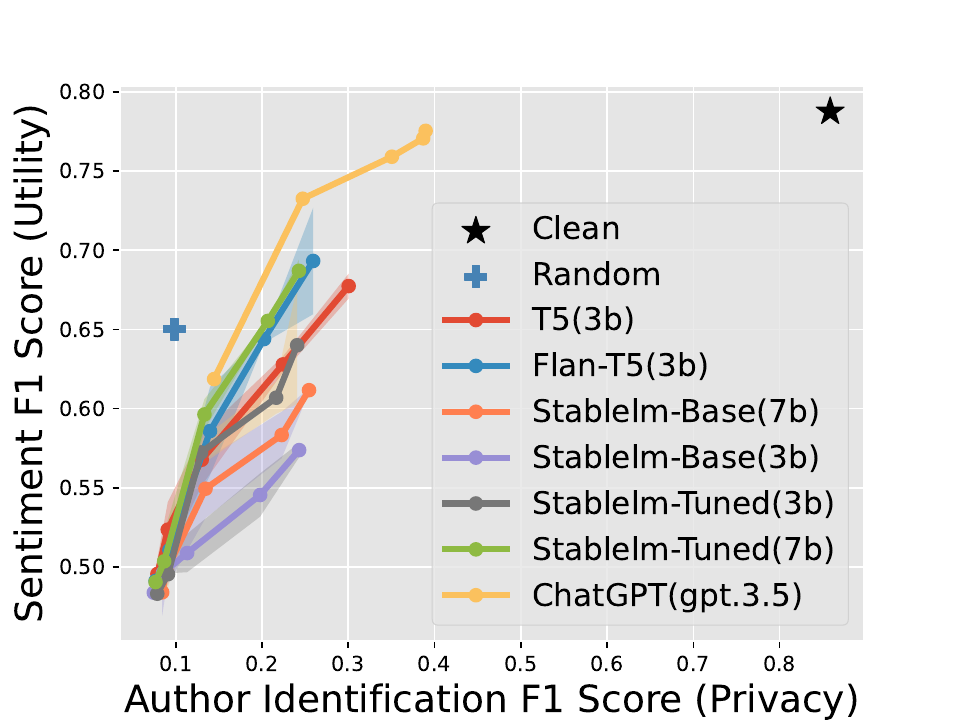}
    \subcaption{Yelp (static)}
    \label{fig:yelp_static}
  \end{subfigure}
  \begin{subfigure}[b]{0.24\linewidth}
    \centering
    \includegraphics[width=\linewidth]{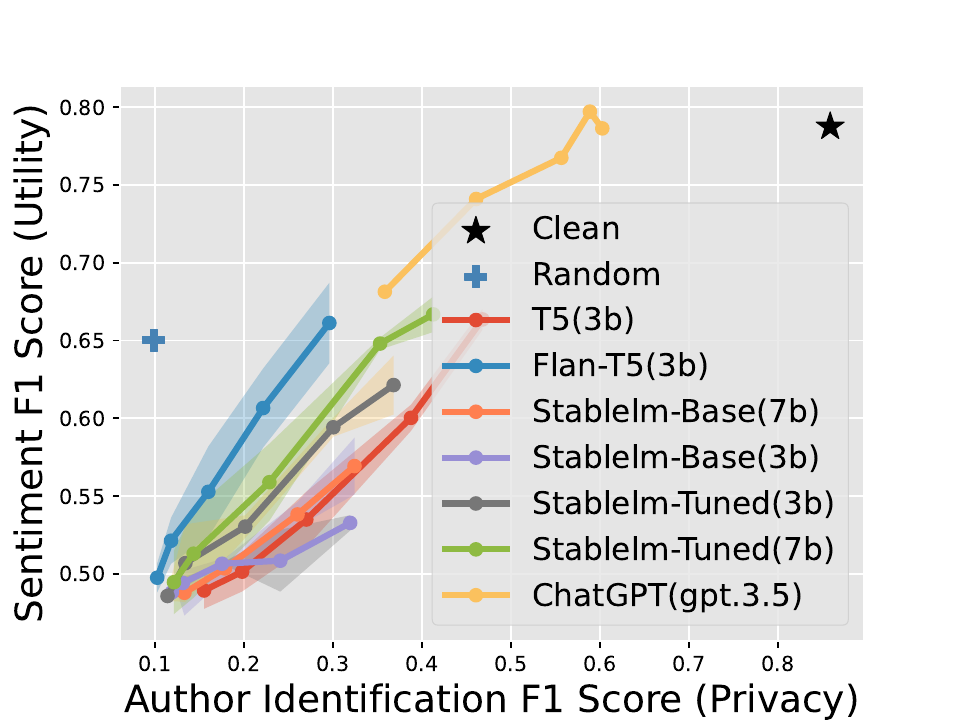}
    \subcaption{Yelp (adaptive)}
    \label{fig:yelp_adaptive}
  \end{subfigure}  

  \begin{subfigure}[b]{0.24\linewidth}
    \centering
    \includegraphics[width=\linewidth]{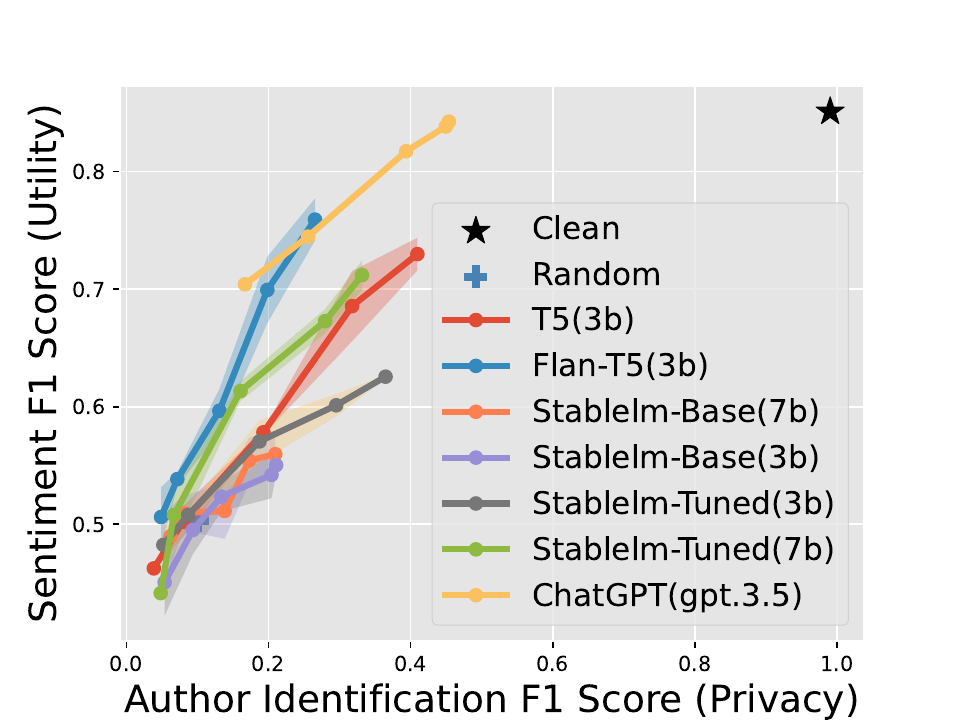}
    \subcaption{IMDB (static)}
    \label{fig:text_imdb_static}
  \end{subfigure}
  \begin{subfigure}[b]{0.24\linewidth}
    \centering
    \includegraphics[width=\linewidth]{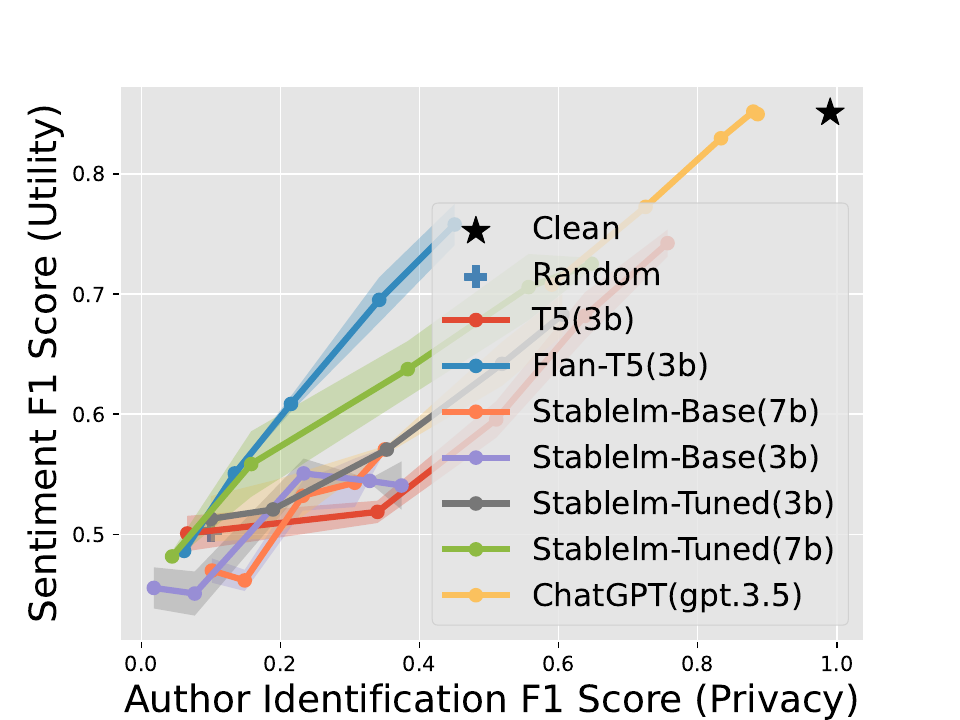}
    \subcaption{IMDB (adaptive)}
    \label{fig:text_imdb_adaptive}
  \end{subfigure}
  \begin{subfigure}[b]{0.24\linewidth}
    \centering
    \includegraphics[width=\linewidth]{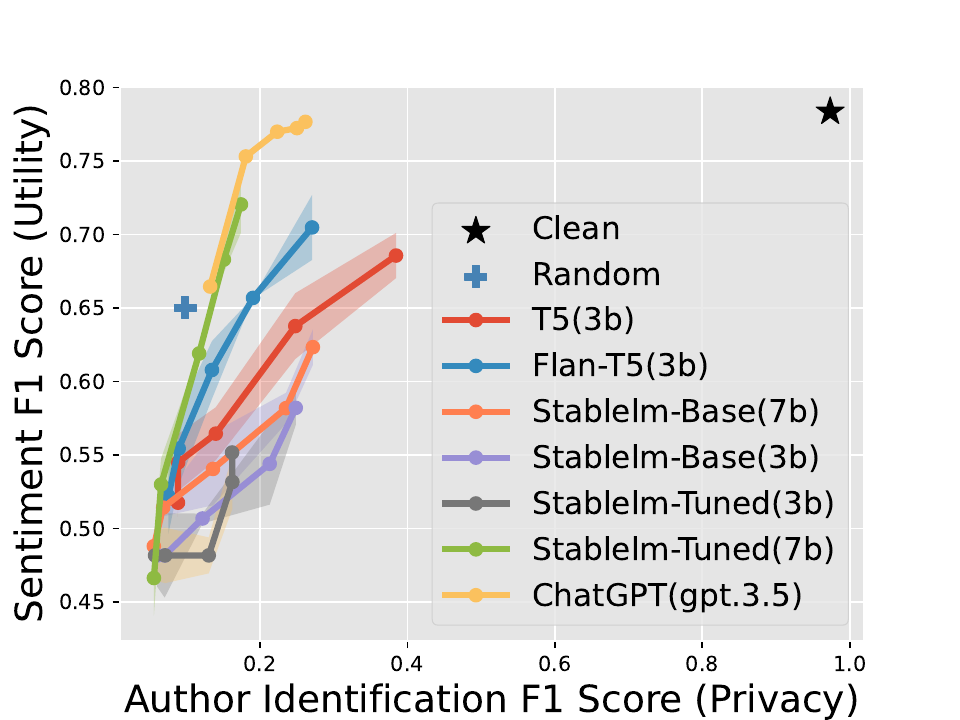}
    \subcaption{Yelp (static)}
    \label{fig:text_yelp_static}
  \end{subfigure}
  \begin{subfigure}[b]{0.24\linewidth}
    \centering
    \includegraphics[width=\linewidth]{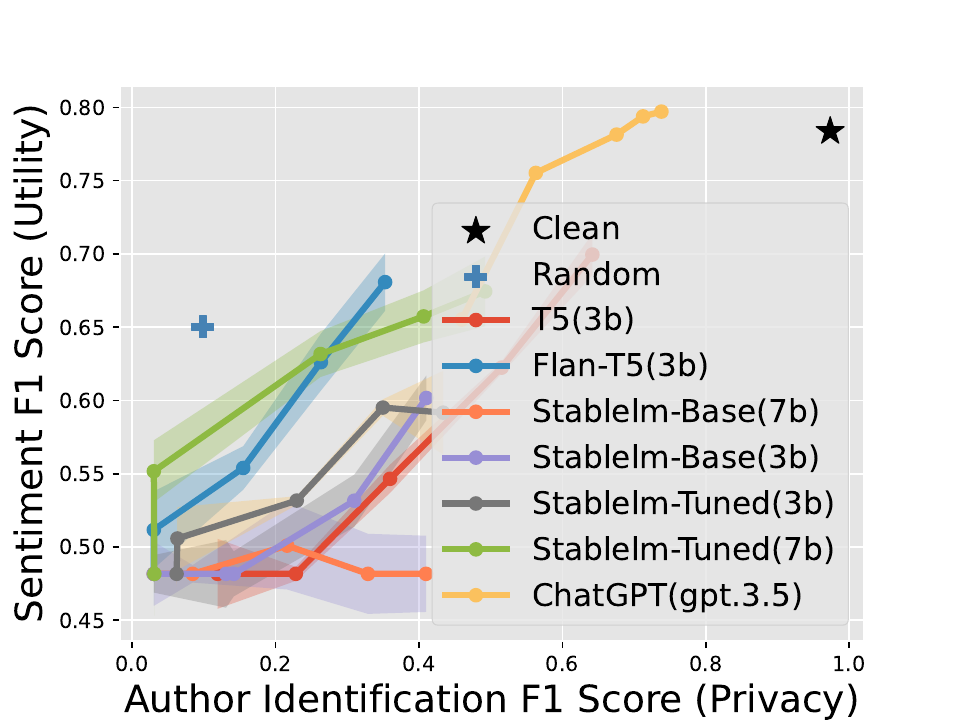}
    \subcaption{Yelp (adaptive)}
    \label{fig:text_yelp_adaptive}
  \end{subfigure}  
  \caption{
 Illustration of privacy-utility tradeoff in DP-Prompt with open source models and ChatGPT(gpt-3.5). The top row shows results for an attacker with embedding access, while the row below presents results for an attacker with text access.}
  \label{fig:open_source}
\end{figure*}

\noindent \textbf{Datasets:} We conduct experiments using IMDB movie reviews and Yelp business reviews, both of which contain author and sentiment labels. The IMDB dataset has a size of 15,000, while the Yelp dataset has 17,336 samples. For both datasets, sentiment analysis is a 2-class classification task, and the author identification task is a 10-class classification task.

\noindent \textbf{Implementation Details:} For the embedding-level attacker, we utilize 3-Layer MLPs with ReLU activation functions and train them on sentence embeddings \cite{reimers2019sentence}. For the text-level attacker, we fine-tune BERT \cite{devlin2018bert}. More details can be found in Appendix \ref{sec:appendix_imp_details}. Regarding the static attacker, the clean set of documents is used for training and validation, while the sanitized documents serve as the test set. On the other hand, for the adaptive attacker, all three sets (training, validation, and testing) consist of sanitized documents. 

\begin{itemize}
    \item For each of word level mechanisms, (Madlib \cite{feyisetan2020privacy}, Mahalanobis \cite{xu2020differentially}, TEM \cite{carvalho2021tem}) we run the mechanisms for $8$ $\epsilon$'s given $\epsilon = \{2,5,8,11,14,17,20,25\}$
    \item For each of sentence level mechanisms (Truncated-Laplace \cite{meehan2022sentence}, Deep-Candidate \cite{meehan2022sentence}): we run the mechanisms for $11$ $\epsilon$'s given by $\epsilon = \{ 5, 10, 20, 30, 40, 50, 75, 100,  150,  200\}.$ 
    \item For Paraphraser  \cite{mattern2022limits} and DP-Prompt with open source models we run decoding at $5$ temperatures $\{0.75, 1.0, 1.25, 1.5, 1.75\}.$ For DP-Prompt we run ChatGPT at temperatures $\{1.0, 1.25, 1.5, 1.75,2.0\}.$
\end{itemize}

Further we also consider F1 scores on Clean (without noise added) embeddings/documents  and performance of uniformly random classifier  (for more details, refer to Appendix \ref{subsec:random_classifier}).

\subsection{DP-prompt with ChatGPT (gpt-3.5)}\label{subsec:baseline}


In this section we compare $6$ baselines 
(Madlib, Mahalanobis, Tem, Truncated-laplace, Deep-candidate, Paraphraser) run with configurations above (for more details also refer to Appendix \ref{sec:appendix_imp_details}) with DP-Prompt with ChatGPT. Except for DP-Prompt, we run each mechanism to $3$ times to produce $3$ different sanitized documents and plot mean author F1 identification score on x-axis and show 2$\sigma$ band around mean sentiment F1 score. Results are show in Figure \ref{fig:baseline}

 The results clearly demonstrate the superior performance of DP-Prompt with ChatGPT (GPT-3.5). Notably, DP-Prompt exhibits significantly higher utility on the y-axis for a chosen empirical privacy value on the x-axis. All word-level mechanisms show a similar privacy-utility tradeoff. Regarding sentence-level mechanisms, the truncated Laplace mechanism performs decently, while in the static attack experiments, Deep-candidate is reduced to a random classifier due to the distribution shift caused by sentence recoding.

Furthermore, in the case of clean reviews (i.e., without any noise), the embedding-level attacker can accurately identify the author among 10 different options with a high F1 score of 0.93 in IMDB and 0.86 in Yelp. However, when DP-Prompt is employed, the sentiment F1 scores remain unchanged, while the author identification scores decrease by 46\% and 25\% in the case of IMDB, and 53\% and 29\% in the case of Yelp.

The text-level models are more accurate than the embedding-level models, with author identification scores of 0.99 (as opposed to 0.93) and 0.97 (as opposed to 0.86) in IMDB and Yelp, respectively, for clean reviews. When DP-Prompt is employed, the sentiment F1 scores remain unchanged, while the author identification scores decrease by 54\% and 10\% in the case of IMDB, and 73\% and 24\% in the case of Yelp. This illustrates that text-level attackers are more powerful.

\renewcommand{\arraystretch}{1.05}
\begin{table*}[t]
\centering
\resizebox{\textwidth}{!}{%
\begin{tabular}{|c|c|c|cccccccccc|cccccccccc|}
\hline
\multirow{3}{*}{} &
  \multirow{3}{*}{} &
  \textbf{Data} &
  \multicolumn{10}{c|}{\textbf{IMDB}} &
  \multicolumn{10}{c|}{\textbf{Yelp}} \\ \cline{3-23} 
 &
   &
  \textbf{Metric} &
  \multicolumn{5}{c|}{\textbf{Sentiment F1 score}} &
  \multicolumn{5}{c|}{\textbf{Author Identification F1 Score}} &
  \multicolumn{5}{c|}{\textbf{Sentiment F1 score}} &
  \multicolumn{5}{c|}{\textbf{Author Identification F1 Score}} \\ \cline{3-23} 
 &
   &
  \textbf{clipping} &
  \multicolumn{1}{c|}{\textbf{0.75}} &
  \multicolumn{1}{c|}{\textbf{1.0}} &
  \multicolumn{1}{c|}{\textbf{1.25}} &
  \multicolumn{1}{c|}{\textbf{1.5}} &
  \multicolumn{1}{c|}{\textbf{1.75}} &
  \multicolumn{1}{c|}{\textbf{0.75}} &
  \multicolumn{1}{c|}{\textbf{1.0}} &
  \multicolumn{1}{c|}{\textbf{1.25}} &
  \multicolumn{1}{c|}{\textbf{1.5}} &
  \textbf{1.75} &
  \multicolumn{1}{c|}{\textbf{0.75}} &
  \multicolumn{1}{c|}{\textbf{1.0}} &
  \multicolumn{1}{c|}{\textbf{1.25}} &
  \multicolumn{1}{c|}{\textbf{1.5}} &
  \multicolumn{1}{c|}{\textbf{1.75}} &
  \multicolumn{1}{c|}{\textbf{0.75}} &
  \multicolumn{1}{c|}{\textbf{1.0}} &
  \multicolumn{1}{c|}{\textbf{1.25}} &
  \multicolumn{1}{c|}{\textbf{1.5}} &
  \textbf{1.75} \\ \hline
\multirow{4}{*}{\textbf{\begin{tabular}[c]{@{}c@{}}Flan-t5\\ (3b)\end{tabular}}} &
  \multirow{2}{*}{\textbf{\begin{tabular}[c]{@{}c@{}}Static \\ Attacker\end{tabular}}} &
  \textbf{Yes} &
  \multicolumn{1}{c|}{0.74} &
  \multicolumn{1}{c|}{0.67} &
  \multicolumn{1}{c|}{0.56} &
  \multicolumn{1}{c|}{0.45} &
  \multicolumn{1}{c|}{0.38} &
  \multicolumn{1}{c|}{0.26} &
  \multicolumn{1}{c|}{0.21} &
  \multicolumn{1}{c|}{0.13} &
  \multicolumn{1}{c|}{0.07} &
  0.05 &
  \multicolumn{1}{c|}{0.69} &
  \multicolumn{1}{c|}{0.62} &
  \multicolumn{1}{c|}{0.58} &
  \multicolumn{1}{c|}{0.54} &
  \multicolumn{1}{c|}{0.48} &
  \multicolumn{1}{c|}{0.21} &
  \multicolumn{1}{c|}{0.17} &
  \multicolumn{1}{c|}{0.12} &
  \multicolumn{1}{c|}{0.09} &
  0.07 \\ \cline{3-23} 
 &
   &
  \textbf{No} &
  \multicolumn{1}{c|}{\begin{tabular}[c]{@{}c@{}}0.75\\ (+0.01)\end{tabular}} &
  \multicolumn{1}{c|}{\begin{tabular}[c]{@{}c@{}}0.69\\ (+0.02)\end{tabular}} &
  \multicolumn{1}{c|}{\begin{tabular}[c]{@{}c@{}}0.58\\ (+0.02)\end{tabular}} &
  \multicolumn{1}{c|}{\begin{tabular}[c]{@{}c@{}}0.46\\ (+0.01)\end{tabular}} &
  \multicolumn{1}{c|}{\begin{tabular}[c]{@{}c@{}}0.40\\ (+0.02)\end{tabular}} &
  \multicolumn{1}{c|}{\begin{tabular}[c]{@{}c@{}}0.31\\ (+0.05)\end{tabular}} &
  \multicolumn{1}{c|}{\begin{tabular}[c]{@{}c@{}}0.24\\ (+0.03)\end{tabular}} &
  \multicolumn{1}{c|}{\begin{tabular}[c]{@{}c@{}}0.14\\ (+0.01)\end{tabular}} &
  \multicolumn{1}{c|}{\begin{tabular}[c]{@{}c@{}}0.08\\ (+0.01)\end{tabular}} &
  \begin{tabular}[c]{@{}c@{}}0.05\\ (+0.00)\end{tabular} &
  \multicolumn{1}{c|}{\begin{tabular}[c]{@{}c@{}}0.69\\ (+0.00)\end{tabular}} &
  \multicolumn{1}{c|}{\begin{tabular}[c]{@{}c@{}}0.64\\ (+0.02)\end{tabular}} &
  \multicolumn{1}{c|}{\begin{tabular}[c]{@{}c@{}}0.58\\ (+0.00)\end{tabular}} &
  \multicolumn{1}{c|}{\begin{tabular}[c]{@{}c@{}}0.54\\ (+0.00)\end{tabular}} &
  \multicolumn{1}{c|}{\begin{tabular}[c]{@{}c@{}}0.49\\ (+0.01)\end{tabular}} &
  \multicolumn{1}{c|}{\begin{tabular}[c]{@{}c@{}}0.25\\ (+0.04)\end{tabular}} &
  \multicolumn{1}{c|}{\begin{tabular}[c]{@{}c@{}}0.20\\ (+0.03)\end{tabular}} &
  \multicolumn{1}{c|}{\begin{tabular}[c]{@{}c@{}}0.13\\ (+0.01)\end{tabular}} &
  \multicolumn{1}{c|}{\begin{tabular}[c]{@{}c@{}}0.09\\ (+0.00)\end{tabular}} &
  \begin{tabular}[c]{@{}c@{}}0.07\\ (+0.00)\end{tabular} \\ \cline{2-23} 
 &
  \multirow{2}{*}{\textbf{\begin{tabular}[c]{@{}c@{}}Adaptive\\ Attacker\end{tabular}}} &
  \textbf{Yes} &
  \multicolumn{1}{c|}{0.72} &
  \multicolumn{1}{c|}{0.66} &
  \multicolumn{1}{c|}{0.58} &
  \multicolumn{1}{c|}{0.52} &
  \multicolumn{1}{c|}{0.51} &
  \multicolumn{1}{c|}{0.34} &
  \multicolumn{1}{c|}{0.25} &
  \multicolumn{1}{c|}{0.16} &
  \multicolumn{1}{c|}{0.13} &
  0.11 &
  \multicolumn{1}{c|}{0.67} &
  \multicolumn{1}{c|}{0.59} &
  \multicolumn{1}{c|}{0.54} &
  \multicolumn{1}{c|}{0.52} &
  \multicolumn{1}{c|}{0.49} &
  \multicolumn{1}{c|}{0.25} &
  \multicolumn{1}{c|}{0.20} &
  \multicolumn{1}{c|}{0.14} &
  \multicolumn{1}{c|}{0.11} &
  0.10 \\ \cline{3-23} 
 &
   &
  \textbf{No} &
  \multicolumn{1}{c|}{\begin{tabular}[c]{@{}c@{}}0.73\\ (+0.01)\end{tabular}} &
  \multicolumn{1}{c|}{\begin{tabular}[c]{@{}c@{}}0.67\\ (+0.01)\end{tabular}} &
  \multicolumn{1}{c|}{\begin{tabular}[c]{@{}c@{}}0.58\\ (+0.00)\end{tabular}} &
  \multicolumn{1}{c|}{\begin{tabular}[c]{@{}c@{}}0.53\\ (+0.01)\end{tabular}} &
  \multicolumn{1}{c|}{\begin{tabular}[c]{@{}c@{}}0.52\\ (+0.01)\end{tabular}} &
  \multicolumn{1}{c|}{\begin{tabular}[c]{@{}c@{}}0.38\\ (+0.04)\end{tabular}} &
  \multicolumn{1}{c|}{\begin{tabular}[c]{@{}c@{}}0.28\\ (+0.03)\end{tabular}} &
  \multicolumn{1}{c|}{\begin{tabular}[c]{@{}c@{}}0.19\\ (+0.03)\end{tabular}} &
  \multicolumn{1}{c|}{\begin{tabular}[c]{@{}c@{}}0.13\\ (+0.00)\end{tabular}} &
  \begin{tabular}[c]{@{}c@{}}0.11\\ (+0.00)\end{tabular} &
  \multicolumn{1}{c|}{\begin{tabular}[c]{@{}c@{}}0.67\\ (0.00)\end{tabular}} &
  \multicolumn{1}{c|}{\begin{tabular}[c]{@{}c@{}}0.60\\ (+0.01)\end{tabular}} &
  \multicolumn{1}{c|}{\begin{tabular}[c]{@{}c@{}}0.55\\ (+0.01)\end{tabular}} &
  \multicolumn{1}{c|}{\begin{tabular}[c]{@{}c@{}}0.52\\ (+0.00)\end{tabular}} &
  \multicolumn{1}{c|}{\begin{tabular}[c]{@{}c@{}}0.49\\ +0.00)\end{tabular}} &
  \multicolumn{1}{c|}{\begin{tabular}[c]{@{}c@{}}0.29\\ (+0.04)\end{tabular}} &
  \multicolumn{1}{c|}{\begin{tabular}[c]{@{}c@{}}0.22\\ (+0.02)\end{tabular}} &
  \multicolumn{1}{c|}{\begin{tabular}[c]{@{}c@{}}0.15\\ (+0.01)\end{tabular}} &
  \multicolumn{1}{c|}{\begin{tabular}[c]{@{}c@{}}0.11\\ (+0.01)\end{tabular}} &
  0.10 \\ \hline
\multirow{4}{*}{\textbf{\begin{tabular}[c]{@{}c@{}}Stablelm\\ Tuned\\ (7b)\end{tabular}}} &
  \multirow{2}{*}{\textbf{\begin{tabular}[c]{@{}c@{}}Static\\ Attacker\end{tabular}}} &
  \textbf{Yes} &
  \multicolumn{1}{c|}{0.67} &
  \multicolumn{1}{c|}{0.63} &
  \multicolumn{1}{c|}{0.53} &
  \multicolumn{1}{c|}{0.38} &
  \multicolumn{1}{c|}{0.34} &
  \multicolumn{1}{c|}{0.33} &
  \multicolumn{1}{c|}{0.29} &
  \multicolumn{1}{c|}{0.12} &
  \multicolumn{1}{c|}{0.05} &
  0.03 &
  \multicolumn{1}{c|}{0.66} &
  \multicolumn{1}{c|}{0.62} &
  \multicolumn{1}{c|}{0.55} &
  \multicolumn{1}{c|}{0.49} &
  \multicolumn{1}{c|}{0.48} &
  \multicolumn{1}{c|}{0.28} &
  \multicolumn{1}{c|}{0.22} &
  \multicolumn{1}{c|}{0.10} &
  \multicolumn{1}{c|}{0.08} &
  0.07 \\ \cline{3-23} 
 &
   &
  \textbf{No} &
  \multicolumn{1}{c|}{\begin{tabular}[c]{@{}c@{}}0.68\\ (+0.01)\end{tabular}} &
  \multicolumn{1}{c|}{\begin{tabular}[c]{@{}c@{}}0.66\\ (+0.03)\end{tabular}} &
  \multicolumn{1}{c|}{\begin{tabular}[c]{@{}c@{}}0.59\\ (+0.06)\end{tabular}} &
  \multicolumn{1}{c|}{\begin{tabular}[c]{@{}c@{}}0.45\\ (+0.07)\end{tabular}} &
  \multicolumn{1}{c|}{\begin{tabular}[c]{@{}c@{}}0.36\\ (+0.02)\end{tabular}} &
  \multicolumn{1}{c|}{\begin{tabular}[c]{@{}c@{}}0.37\\ (+0.04)\end{tabular}} &
  \multicolumn{1}{c|}{\begin{tabular}[c]{@{}c@{}}0.31\\ (+0.02)\end{tabular}} &
  \multicolumn{1}{c|}{\begin{tabular}[c]{@{}c@{}}0.18\\ (+0.06)\end{tabular}} &
  \multicolumn{1}{c|}{\begin{tabular}[c]{@{}c@{}}0.07\\ (+0.02)\end{tabular}} &
  \begin{tabular}[c]{@{}c@{}}0.05\\ (+0.02)\end{tabular} &
  \multicolumn{1}{c|}{\begin{tabular}[c]{@{}c@{}}0.68\\ (+0.02)\end{tabular}} &
  \multicolumn{1}{c|}{\begin{tabular}[c]{@{}c@{}}0.65\\ (+0.03)\end{tabular}} &
  \multicolumn{1}{c|}{\begin{tabular}[c]{@{}c@{}}0.59\\ (+0.04)\end{tabular}} &
  \multicolumn{1}{c|}{\begin{tabular}[c]{@{}c@{}}0.50\\ (+0.01)\end{tabular}} &
  \multicolumn{1}{c|}{\begin{tabular}[c]{@{}c@{}}0.49\\ (+0.01)\end{tabular}} &
  \multicolumn{1}{c|}{\begin{tabular}[c]{@{}c@{}}0.26\\ (-0.02)\end{tabular}} &
  \multicolumn{1}{c|}{\begin{tabular}[c]{@{}c@{}}0.21\\ (-0.01)\end{tabular}} &
  \multicolumn{1}{c|}{\begin{tabular}[c]{@{}c@{}}0.13\\ (+0.03)\end{tabular}} &
  \multicolumn{1}{c|}{\begin{tabular}[c]{@{}c@{}}0.08\\ (+0.00)\end{tabular}} &
  \begin{tabular}[c]{@{}c@{}}0.07\\ (+0.00)\end{tabular} \\ \cline{2-23} 
 &
  \multirow{2}{*}{\textbf{\begin{tabular}[c]{@{}c@{}}Adaptive\\ Attacker\end{tabular}}} &
  \textbf{Yes} &
  \multicolumn{1}{c|}{0.65} &
  \multicolumn{1}{c|}{0.60} &
  \multicolumn{1}{c|}{0.54} &
  \multicolumn{1}{c|}{0.51} &
  \multicolumn{1}{c|}{0.50} &
  \multicolumn{1}{c|}{0.45} &
  \multicolumn{1}{c|}{0.37} &
  \multicolumn{1}{c|}{0.22} &
  \multicolumn{1}{c|}{0.14} &
  0.12 &
  \multicolumn{1}{c|}{0.63} &
  \multicolumn{1}{c|}{0.57} &
  \multicolumn{1}{c|}{0.50} &
  \multicolumn{1}{c|}{0.49} &
  \multicolumn{1}{c|}{0.48} &
  \multicolumn{1}{c|}{0.36} &
  \multicolumn{1}{c|}{0.28} &
  \multicolumn{1}{c|}{0.17} &
  \multicolumn{1}{c|}{0.13} &
  0.10 \\ \cline{3-23} 
 &
   &
  \textbf{No} &
  \multicolumn{1}{c|}{\begin{tabular}[c]{@{}c@{}}0.71\\ (+0.06)\end{tabular}} &
  \multicolumn{1}{c|}{\begin{tabular}[c]{@{}c@{}}0.67\\ (+0.07)\end{tabular}} &
  \multicolumn{1}{c|}{\begin{tabular}[c]{@{}c@{}}0.59\\ (+0.05)\end{tabular}} &
  \multicolumn{1}{c|}{\begin{tabular}[c]{@{}c@{}}0.53\\ (+0.02)\end{tabular}} &
  \multicolumn{1}{c|}{\begin{tabular}[c]{@{}c@{}}0.51\\ (0.01)\end{tabular}} &
  \multicolumn{1}{c|}{\begin{tabular}[c]{@{}c@{}}0.53\\ (+0.08)\end{tabular}} &
  \multicolumn{1}{c|}{\begin{tabular}[c]{@{}c@{}}0.46\\ (+0.09)\end{tabular}} &
  \multicolumn{1}{c|}{\begin{tabular}[c]{@{}c@{}}0.30\\ (+0.02)\end{tabular}} &
  \multicolumn{1}{c|}{\begin{tabular}[c]{@{}c@{}}0.17\\ (+0.03)\end{tabular}} &
  \begin{tabular}[c]{@{}c@{}}0.14\\ (+0.02)\end{tabular} &
  \multicolumn{1}{c|}{\begin{tabular}[c]{@{}c@{}}0.66\\ (+0.03)\end{tabular}} &
  \multicolumn{1}{c|}{\begin{tabular}[c]{@{}c@{}}0.64\\ (+0.07)\end{tabular}} &
  \multicolumn{1}{c|}{\begin{tabular}[c]{@{}c@{}}0.55\\ (+0.05)\end{tabular}} &
  \multicolumn{1}{c|}{\begin{tabular}[c]{@{}c@{}}0.51\\ (+0.02)\end{tabular}} &
  \multicolumn{1}{c|}{\begin{tabular}[c]{@{}c@{}}0.49\\ (0.01)\end{tabular}} &
  \multicolumn{1}{c|}{\begin{tabular}[c]{@{}c@{}}0.41\\ (+0.05)\end{tabular}} &
  \multicolumn{1}{c|}{\begin{tabular}[c]{@{}c@{}}0.35\\ (+0.07)\end{tabular}} &
  \multicolumn{1}{c|}{\begin{tabular}[c]{@{}c@{}}0.22\\ (+0.05)\end{tabular}} &
  \multicolumn{1}{c|}{\begin{tabular}[c]{@{}c@{}}0.14\\ (+0.01)\end{tabular}} &
  \begin{tabular}[c]{@{}c@{}}0.12\\ (+0.02)\end{tabular} \\ \hline
\end{tabular}%
}
\caption{Effect of clipping (shown as Yes) and without clipping (shown as No) on privacy-utility tradeoff.}
\label{tab:clipping}
\end{table*}
\subsection{DP-Prompt with open source models} \label{subsec:open-source}

It is important to note that models such as ChatGPT (gpt-3.5) are proprietary and can only be accessed through APIs, necessitating the uploading of user documents to the language model provider. Although DP-Prompt with such proprietary models provide LDP guarantee, this defeats the fundamental motivation of LDP, which is to achieve privacy guarantees without relying on a trusted data curator. The  objective of the experiments presented in the preceding section aim to demonstrate that DP-Prompt, when combined with a powerful language model like ChatGPT, can outperform existing methods by a significant margin.

Considering the increasing interest in building high-quality open-source large language models \citep{scao2022bloom,  black2022gpt,  touvron2023llama, li2023textbooks, jiang2023mistral}, we expand our evaluation of DP-Prompt to include six open-source models, ranging in size up to seven billion parameters. Our evaluation takes into account two factors: (i) architecture, where we consider both encoder-decoder and decoder-only models, and (ii) the level of fine-tuning, including models that are fine-tuned using instructions and/or Reinforcement Learning with Human Feedback (RLHF). Specifically $6$ models are as follows,  \textit{Base}:  T5 (3b),  Stable lm base (3b, 7b),\textit{Instruction finetuned/RLHF tuned:} Flan T5 (3b), Stable lm tuned (3b,7b).

While T5, Flan T5 are encoder-decoder models, rest of the models are  decoder-only. In contrast to DP-Prompt with ChatGPT, we perform DP-Prompt using the aforementioned open source language models three times for each dataset and temperature, resulting in three sanitized documents. Further we all run these open source models at half precision $(\text{torch.float16})$ and with max number of tokens in paraphrase to $150$. The obtained results are presented in Figure \ref{fig:open_source}.
 
Now we compare open source models along various factors 

\noindent \textbf{Base vs Instruction/RLHF tuned:} It is evident that the base models perform poorly in comparison to the models that underwent instruction fine-tuning/RLHF tuning. One important point to mention is that both StableLM-Based (3b, 7b) perform significantly worse against adaptive attackers.

\noindent \textbf{Scale:} We observe that scale plays a key role, as StableLM-tuned (7b) demonstrates better utility than StableLM-Tuned (3b) across all levels of empirical privacy, with ChatGPT outperforming both.

\noindent \textbf{Variance:} It is worth noting that we did not obtain variance for DP-Prompt with ChatGPT in Figure \ref{fig:baseline} through multiple runs. However, our analysis in Figure \ref{fig:open_source} suggests that there is no significant variance in the Sentiment F1 score, at least for open source models.

\noindent  \textbf{Encoder-Decoder / Decoder only:} Flan-T5(3b), an encoder-decoder model, outperforms the larger Stablelm-Tuned (7b) model. Notably, Flan-T5(3b) is fine-tuned exclusively on academic datasets, resulting in shorter paraphrases compared to Stablelm-Tuned (7b).

\noindent \textbf{ChatGPT vs Rest:} While the open-source models we considered demonstrate competitiveness with ChatGPT, a notable gap remains. The key finding is that even at a higher temperature of 1.5, ChatGPT is capable of recovering a clean sentiment F1 score, while none of the open-source models can achieve a matching clean sentiment F1 score, even at a significantly lower temperature of 0.75. See Figure \ref{fig:chat_gpt_vs_stablelm_tuned}, which presents the paraphrases output by ChatGPT and Stablelm-Tuned (7B).

%


\renewcommand{\arraystretch}{1.05}
\begin{table*}[t]
\centering
\resizebox{\textwidth}{!}{%
\begin{tabular}{|c|c|c|cccccccccc|cccccccccc|}
\hline
\multirow{3}{*}{} &
  \multirow{3}{*}{} &
  \textbf{Data} &
  \multicolumn{10}{c|}{\textbf{IMDB}} &
  \multicolumn{10}{c|}{\textbf{Yelp}} \\ \cline{3-23} 
 &
   &
  \textbf{Metric} &
  \multicolumn{5}{c|}{\textbf{Sentiment F1 score}} &
  \multicolumn{5}{c|}{\textbf{Author Identification F1 Score}} &
  \multicolumn{5}{c|}{\textbf{Sentiment F1 score}} &
  \multicolumn{5}{c|}{\textbf{Author Identification F1 Score}} \\ \cline{3-23} 
 &
   &
  \textbf{top\_k} &
  \multicolumn{1}{c|}{\textbf{0.75}} &
  \multicolumn{1}{c|}{\textbf{1.0}} &
  \multicolumn{1}{c|}{\textbf{1.25}} &
  \multicolumn{1}{c|}{\textbf{1.5}} &
  \multicolumn{1}{c|}{\textbf{1.75}} &
  \multicolumn{1}{c|}{\textbf{0.75}} &
  \multicolumn{1}{c|}{\textbf{1.0}} &
  \multicolumn{1}{c|}{\textbf{1.25}} &
  \multicolumn{1}{c|}{\textbf{1.5}} &
  \textbf{1.75} &
  \multicolumn{1}{c|}{\textbf{0.75}} &
  \multicolumn{1}{c|}{\textbf{1.0}} &
  \multicolumn{1}{c|}{\textbf{1.25}} &
  \multicolumn{1}{c|}{\textbf{1.5}} &
  \multicolumn{1}{c|}{\textbf{1.75}} &
  \multicolumn{1}{c|}{\textbf{0.75}} &
  \multicolumn{1}{c|}{\textbf{1.0}} &
  \multicolumn{1}{c|}{\textbf{1.25}} &
  \multicolumn{1}{c|}{\textbf{1.5}} &
  \textbf{1.75} \\ \hline
\multirow{6}{*}{\textbf{Flan-t5}} &
  \multirow{3}{*}{\textbf{\begin{tabular}[c]{@{}c@{}}Static \\ Attacker\end{tabular}}} &
  \textbf{all} &
  \multicolumn{1}{c|}{0.75} &
  \multicolumn{1}{c|}{0.69} &
  \multicolumn{1}{c|}{0.58} &
  \multicolumn{1}{c|}{0.46} &
  \multicolumn{1}{c|}{0.40} &
  \multicolumn{1}{c|}{0.31} &
  \multicolumn{1}{c|}{0.24} &
  \multicolumn{1}{c|}{0.14} &
  \multicolumn{1}{c|}{0.08} &
  0.05 &
  \multicolumn{1}{c|}{0.69} &
  \multicolumn{1}{c|}{0.64} &
  \multicolumn{1}{c|}{0.58} &
  \multicolumn{1}{c|}{0.51} &
  \multicolumn{1}{c|}{0.49} &
  \multicolumn{1}{c|}{0.25} &
  \multicolumn{1}{c|}{0.20} &
  \multicolumn{1}{c|}{0.13} &
  \multicolumn{1}{c|}{0.09} &
  0.07 \\ \cline{3-23} 
 &
   &
  \textbf{80} &
  \multicolumn{1}{c|}{\begin{tabular}[c]{@{}c@{}}0.75\\ (+0.0)\end{tabular}} &
  \multicolumn{1}{c|}{\begin{tabular}[c]{@{}c@{}}0.72\\ (+0.03)\end{tabular}} &
  \multicolumn{1}{c|}{\begin{tabular}[c]{@{}c@{}}0.70\\ (+0.12)\end{tabular}} &
  \multicolumn{1}{c|}{\begin{tabular}[c]{@{}c@{}}0.69\\ (+0.23)\end{tabular}} &
  \multicolumn{1}{c|}{\begin{tabular}[c]{@{}c@{}}0.66\\ (+0.26)\end{tabular}} &
  \multicolumn{1}{c|}{\begin{tabular}[c]{@{}c@{}}0.31\\ (+0.0)\end{tabular}} &
  \multicolumn{1}{c|}{\begin{tabular}[c]{@{}c@{}}0.27\\ (+0.03)\end{tabular}} &
  \multicolumn{1}{c|}{\begin{tabular}[c]{@{}c@{}}0.23\\ (+0.09)\end{tabular}} &
  \multicolumn{1}{c|}{\begin{tabular}[c]{@{}c@{}}0.22\\ (+0.14)\end{tabular}} &
  \begin{tabular}[c]{@{}c@{}}0.21\\ (+0.16)\end{tabular} &
  \multicolumn{1}{c|}{\begin{tabular}[c]{@{}c@{}}0.71\\ (+0.02)\end{tabular}} &
  \multicolumn{1}{c|}{\begin{tabular}[c]{@{}c@{}}0.69\\ (+0.05)\end{tabular}} &
  \multicolumn{1}{c|}{\begin{tabular}[c]{@{}c@{}}0.68\\ (+0.10)\end{tabular}} &
  \multicolumn{1}{c|}{\begin{tabular}[c]{@{}c@{}}0.64\\ (+0.13)\end{tabular}} &
  \multicolumn{1}{c|}{\begin{tabular}[c]{@{}c@{}}0.64\\ (+0.15)\end{tabular}} &
  \multicolumn{1}{c|}{\begin{tabular}[c]{@{}c@{}}0.25\\ (+0.00)\end{tabular}} &
  \multicolumn{1}{c|}{\begin{tabular}[c]{@{}c@{}}0.20\\ (+0.00)\end{tabular}} &
  \multicolumn{1}{c|}{\begin{tabular}[c]{@{}c@{}}0.19\\ (+0.06)\end{tabular}} &
  \multicolumn{1}{c|}{\begin{tabular}[c]{@{}c@{}}0.19\\ (+0.10)\end{tabular}} &
  \begin{tabular}[c]{@{}c@{}}0.17\\ (+0.10)\end{tabular} \\ \cline{3-23} 
 &
   &
  \textbf{40} &
  \multicolumn{1}{c|}{\begin{tabular}[c]{@{}c@{}}0.75\\ (+0.00)\end{tabular}} &
  \multicolumn{1}{c|}{\begin{tabular}[c]{@{}c@{}}0.74\\ (+0.05)\end{tabular}} &
  \multicolumn{1}{c|}{\begin{tabular}[c]{@{}c@{}}0.72\\ (+0.24)\end{tabular}} &
  \multicolumn{1}{c|}{\begin{tabular}[c]{@{}c@{}}0.69\\ (+0.23)\end{tabular}} &
  \multicolumn{1}{c|}{\begin{tabular}[c]{@{}c@{}}0.67\\ (+0.27)\end{tabular}} &
  \multicolumn{1}{c|}{\begin{tabular}[c]{@{}c@{}}0.32\\ (+0.0)\end{tabular}} &
  \multicolumn{1}{c|}{\begin{tabular}[c]{@{}c@{}}0.29\\ (+0.05)\end{tabular}} &
  \multicolumn{1}{c|}{\begin{tabular}[c]{@{}c@{}}0.26\\ (+0.12)\end{tabular}} &
  \multicolumn{1}{c|}{\begin{tabular}[c]{@{}c@{}}0.25\\ (+0.17)\end{tabular}} &
  \begin{tabular}[c]{@{}c@{}}0.23\\ (+0.18)\end{tabular} &
  \multicolumn{1}{c|}{\begin{tabular}[c]{@{}c@{}}0.71\\ (+0.02)\end{tabular}} &
  \multicolumn{1}{c|}{\begin{tabular}[c]{@{}c@{}}0.69\\ (+0.05)\end{tabular}} &
  \multicolumn{1}{c|}{\begin{tabular}[c]{@{}c@{}}0.68\\ (+0.10)\end{tabular}} &
  \multicolumn{1}{c|}{\begin{tabular}[c]{@{}c@{}}0.64\\ (+0.13)\end{tabular}} &
  \multicolumn{1}{c|}{\begin{tabular}[c]{@{}c@{}}0.64\\ (+0.15)\end{tabular}} &
  \multicolumn{1}{c|}{\begin{tabular}[c]{@{}c@{}}0.25\\ (+0.00)\end{tabular}} &
  \multicolumn{1}{c|}{\begin{tabular}[c]{@{}c@{}}0.22\\ (+0.02)\end{tabular}} &
  \multicolumn{1}{c|}{\begin{tabular}[c]{@{}c@{}}0.20\\ (+0.07)\end{tabular}} &
  \multicolumn{1}{c|}{\begin{tabular}[c]{@{}c@{}}0.19\\ (+0.10)\end{tabular}} &
  \begin{tabular}[c]{@{}c@{}}0.17\\ (+0.10)\end{tabular} \\ \cline{2-23} 
 &
  \multirow{3}{*}{\textbf{\begin{tabular}[c]{@{}c@{}}Adaptive\\ Attacker\end{tabular}}} &
  \textbf{all} &
  \multicolumn{1}{c|}{0.73} &
  \multicolumn{1}{c|}{0.67} &
  \multicolumn{1}{c|}{0.58} &
  \multicolumn{1}{c|}{0.53} &
  \multicolumn{1}{c|}{0.52} &
  \multicolumn{1}{c|}{0.38} &
  \multicolumn{1}{c|}{0.28} &
  \multicolumn{1}{c|}{0.19} &
  \multicolumn{1}{c|}{0.12} &
  0.11 &
  \multicolumn{1}{c|}{0.66} &
  \multicolumn{1}{c|}{0.60} &
  \multicolumn{1}{c|}{0.55} &
  \multicolumn{1}{c|}{0.52} &
  \multicolumn{1}{c|}{0.49} &
  \multicolumn{1}{c|}{0.29} &
  \multicolumn{1}{c|}{0.22} &
  \multicolumn{1}{c|}{0.15} &
  \multicolumn{1}{c|}{0.11} &
  0.10 \\ \cline{3-23} 
 &
   &
  \textbf{80} &
  \multicolumn{1}{c|}{\begin{tabular}[c]{@{}c@{}}0.76\\ (+0.03)\end{tabular}} &
  \multicolumn{1}{c|}{\begin{tabular}[c]{@{}c@{}}0.70\\ (+0.03)\end{tabular}} &
  \multicolumn{1}{c|}{\begin{tabular}[c]{@{}c@{}}0.69\\ (+0.11)\end{tabular}} &
  \multicolumn{1}{c|}{\begin{tabular}[c]{@{}c@{}}0.66\\ (+0.13)\end{tabular}} &
  \multicolumn{1}{c|}{\begin{tabular}[c]{@{}c@{}}0.65\\ (+0.13)\end{tabular}} &
  \multicolumn{1}{c|}{\begin{tabular}[c]{@{}c@{}}0.39\\ (+0.01)\end{tabular}} &
  \multicolumn{1}{c|}{\begin{tabular}[c]{@{}c@{}}0.33\\ (+0.05)\end{tabular}} &
  \multicolumn{1}{c|}{\begin{tabular}[c]{@{}c@{}}0.29\\ (+0.10)\end{tabular}} &
  \multicolumn{1}{c|}{\begin{tabular}[c]{@{}c@{}}0.27\\ (+0.15)\end{tabular}} &
  \begin{tabular}[c]{@{}c@{}}0.26\\ (+0.15)\end{tabular} &
  \multicolumn{1}{c|}{\begin{tabular}[c]{@{}c@{}}0.67\\ (+0.01)\end{tabular}} &
  \multicolumn{1}{c|}{\begin{tabular}[c]{@{}c@{}}0.66\\ (+0.06)\end{tabular}} &
  \multicolumn{1}{c|}{\begin{tabular}[c]{@{}c@{}}0.62\\ (+0.07)\end{tabular}} &
  \multicolumn{1}{c|}{\begin{tabular}[c]{@{}c@{}}0.60\\ (+0.08)\end{tabular}} &
  \multicolumn{1}{c|}{\begin{tabular}[c]{@{}c@{}}0.60\\ (+0.11)\end{tabular}} &
  \multicolumn{1}{c|}{\begin{tabular}[c]{@{}c@{}}0.31\\ (+0.02)\end{tabular}} &
  \multicolumn{1}{c|}{\begin{tabular}[c]{@{}c@{}}0.24\\ (+0.02)\end{tabular}} &
  \multicolumn{1}{c|}{\begin{tabular}[c]{@{}c@{}}0.23\\ (+0.08)\end{tabular}} &
  \multicolumn{1}{c|}{\begin{tabular}[c]{@{}c@{}}0.20\\ (+0.09)\end{tabular}} &
  \begin{tabular}[c]{@{}c@{}}0.20\\ (+0.10)\end{tabular} \\ \cline{3-23} 
 &
   &
  \textbf{40} &
  \multicolumn{1}{c|}{\begin{tabular}[c]{@{}c@{}}0.75\\ (+0.02)\end{tabular}} &
  \multicolumn{1}{c|}{\begin{tabular}[c]{@{}c@{}}0.72\\ (+0.05)\end{tabular}} &
  \multicolumn{1}{c|}{\begin{tabular}[c]{@{}c@{}}0.70\\ (+0.12)\end{tabular}} &
  \multicolumn{1}{c|}{\begin{tabular}[c]{@{}c@{}}0.67\\ (+0.14)\end{tabular}} &
  \multicolumn{1}{c|}{\begin{tabular}[c]{@{}c@{}}0.67\\ (+0.15)\end{tabular}} &
  \multicolumn{1}{c|}{\begin{tabular}[c]{@{}c@{}}0.41\\ (+0.02)\end{tabular}} &
  \multicolumn{1}{c|}{\begin{tabular}[c]{@{}c@{}}0.36\\ (+0.08)\end{tabular}} &
  \multicolumn{1}{c|}{\begin{tabular}[c]{@{}c@{}}0.33\\ (+0.14)\end{tabular}} &
  \multicolumn{1}{c|}{\begin{tabular}[c]{@{}c@{}}0.31\\ (+0.19)\end{tabular}} &
  \begin{tabular}[c]{@{}c@{}}0.29\\ (+0.18)\end{tabular} &
  \multicolumn{1}{c|}{\begin{tabular}[c]{@{}c@{}}0.69\\ (+0.02)\end{tabular}} &
  \multicolumn{1}{c|}{\begin{tabular}[c]{@{}c@{}}0.66\\ (+0.06)\end{tabular}} &
  \multicolumn{1}{c|}{\begin{tabular}[c]{@{}c@{}}0.65\\ (+0.10)\end{tabular}} &
  \multicolumn{1}{c|}{\begin{tabular}[c]{@{}c@{}}0.63\\ (+0.11)\end{tabular}} &
  \multicolumn{1}{c|}{\begin{tabular}[c]{@{}c@{}}0.61\\ (+0.12)\end{tabular}} &
  \multicolumn{1}{c|}{\begin{tabular}[c]{@{}c@{}}0.31\\ (+0.02)\end{tabular}} &
  \multicolumn{1}{c|}{\begin{tabular}[c]{@{}c@{}}0.26\\ (+0.04)\end{tabular}} &
  \multicolumn{1}{c|}{\begin{tabular}[c]{@{}c@{}}0.24\\ (+0.09)\end{tabular}} &
  \multicolumn{1}{c|}{\begin{tabular}[c]{@{}c@{}}0.23\\ (+0.12)\end{tabular}} &
  \begin{tabular}[c]{@{}c@{}}0.23\\ (+0.13)\end{tabular} \\ \hline
\multirow{6}{*}{\textbf{\begin{tabular}[c]{@{}c@{}}Stablelm\\ Tuned\\   (7b)\end{tabular}}} &
  \multirow{3}{*}{\textbf{\begin{tabular}[c]{@{}c@{}}Static\\ Attacker\end{tabular}}} &
  \textbf{all} &
  \multicolumn{1}{c|}{0.68} &
  \multicolumn{1}{c|}{0.66} &
  \multicolumn{1}{c|}{0.59} &
  \multicolumn{1}{c|}{0.45} &
  \multicolumn{1}{c|}{0.36} &
  \multicolumn{1}{c|}{0.37} &
  \multicolumn{1}{c|}{0.31} &
  \multicolumn{1}{c|}{0.18} &
  \multicolumn{1}{c|}{0.07} &
  0.05 &
  \multicolumn{1}{c|}{0.68} &
  \multicolumn{1}{c|}{0.65} &
  \multicolumn{1}{c|}{0.59} &
  \multicolumn{1}{c|}{0.50} &
  \multicolumn{1}{c|}{0.49} &
  \multicolumn{1}{c|}{0.24} &
  \multicolumn{1}{c|}{0.20} &
  \multicolumn{1}{c|}{0.13} &
  \multicolumn{1}{c|}{0.08} &
  0.07 \\ \cline{3-23} 
 &
   &
  \textbf{80} &
  \multicolumn{1}{c|}{\begin{tabular}[c]{@{}c@{}}0.69\\ (+0.01)\end{tabular}} &
  \multicolumn{1}{c|}{\begin{tabular}[c]{@{}c@{}}0.67\\ (+0.01)\end{tabular}} &
  \multicolumn{1}{c|}{\begin{tabular}[c]{@{}c@{}}0.66\\ (+0.07)\end{tabular}} &
  \multicolumn{1}{c|}{\begin{tabular}[c]{@{}c@{}}0.62\\ (+0.17)\end{tabular}} &
  \multicolumn{1}{c|}{\begin{tabular}[c]{@{}c@{}}0.62\\ (+0.26)\end{tabular}} &
  \multicolumn{1}{c|}{\begin{tabular}[c]{@{}c@{}}0.37\\ (+0.00)\end{tabular}} &
  \multicolumn{1}{c|}{\begin{tabular}[c]{@{}c@{}}0.34\\ (+0.03)\end{tabular}} &
  \multicolumn{1}{c|}{\begin{tabular}[c]{@{}c@{}}0.28\\ (+0.10)\end{tabular}} &
  \multicolumn{1}{c|}{\begin{tabular}[c]{@{}c@{}}0.27\\ (+0.20)\end{tabular}} &
  \begin{tabular}[c]{@{}c@{}}0.24\\ (+0.l9)\end{tabular} &
  \multicolumn{1}{c|}{\begin{tabular}[c]{@{}c@{}}0.69\\ (+0.01)\end{tabular}} &
  \multicolumn{1}{c|}{\begin{tabular}[c]{@{}c@{}}0.66\\ (+0.01)\end{tabular}} &
  \multicolumn{1}{c|}{\begin{tabular}[c]{@{}c@{}}0.65\\ (+0.06)\end{tabular}} &
  \multicolumn{1}{c|}{\begin{tabular}[c]{@{}c@{}}0.62\\ (+0.12)\end{tabular}} &
  \multicolumn{1}{c|}{\begin{tabular}[c]{@{}c@{}}0.59\\ (+0.10)\end{tabular}} &
  \multicolumn{1}{c|}{\begin{tabular}[c]{@{}c@{}}0.24\\ (+0.00)\end{tabular}} &
  \multicolumn{1}{c|}{\begin{tabular}[c]{@{}c@{}}0.22\\ (+0.02)\end{tabular}} &
  \multicolumn{1}{c|}{\begin{tabular}[c]{@{}c@{}}0.19\\ (+0.06)\end{tabular}} &
  \multicolumn{1}{c|}{\begin{tabular}[c]{@{}c@{}}0.16\\ (+0.08)\end{tabular}} &
  \begin{tabular}[c]{@{}c@{}}0.16\\ (+0.09)\end{tabular} \\ \cline{3-23} 
 &
   &
  \textbf{40} &
  \multicolumn{1}{c|}{\begin{tabular}[c]{@{}c@{}}0.69\\ (+0.01)\end{tabular}} &
  \multicolumn{1}{c|}{\begin{tabular}[c]{@{}c@{}}0.67\\ (+0.01)\end{tabular}} &
  \multicolumn{1}{c|}{\begin{tabular}[c]{@{}c@{}}0.66\\ (+0.07)\end{tabular}} &
  \multicolumn{1}{c|}{\begin{tabular}[c]{@{}c@{}}0.64\\ (+0.19)\end{tabular}} &
  \multicolumn{1}{c|}{\begin{tabular}[c]{@{}c@{}}0.64\\ (+0.28)\end{tabular}} &
  \multicolumn{1}{c|}{\begin{tabular}[c]{@{}c@{}}0.38\\ (+0.01)\end{tabular}} &
  \multicolumn{1}{c|}{\begin{tabular}[c]{@{}c@{}}0.34\\ (+0.03)\end{tabular}} &
  \multicolumn{1}{c|}{\begin{tabular}[c]{@{}c@{}}0.30\\ (+0.12)\end{tabular}} &
  \multicolumn{1}{c|}{\begin{tabular}[c]{@{}c@{}}0.27\\ (+0.20)\end{tabular}} &
  \begin{tabular}[c]{@{}c@{}}0.27\\ (+0.22)\end{tabular} &
  \multicolumn{1}{c|}{\begin{tabular}[c]{@{}c@{}}0.69\\ (+0.01)\end{tabular}} &
  \multicolumn{1}{c|}{\begin{tabular}[c]{@{}c@{}}0.69\\ (+0.04)\end{tabular}} &
  \multicolumn{1}{c|}{\begin{tabular}[c]{@{}c@{}}0.61\\ (+0.02)\end{tabular}} &
  \multicolumn{1}{c|}{\begin{tabular}[c]{@{}c@{}}0.65\\ (+0.15)\end{tabular}} &
  \multicolumn{1}{c|}{\begin{tabular}[c]{@{}c@{}}0.65\\ (+0.16)\end{tabular}} &
  \multicolumn{1}{c|}{\begin{tabular}[c]{@{}c@{}}0.25\\ (+0.01)\end{tabular}} &
  \multicolumn{1}{c|}{\begin{tabular}[c]{@{}c@{}}0.22\\ (+0.02)\end{tabular}} &
  \multicolumn{1}{c|}{\begin{tabular}[c]{@{}c@{}}0.20\\ (+0.07)\end{tabular}} &
  \multicolumn{1}{c|}{\begin{tabular}[c]{@{}c@{}}0.19\\ (+0.11)\end{tabular}} &
  \begin{tabular}[c]{@{}c@{}}0.18\\ (+0.11)\end{tabular} \\ \cline{2-23} 
 &
  \multirow{3}{*}{\textbf{\begin{tabular}[c]{@{}c@{}}Adaptive\\ Attacker\end{tabular}}} &
  \textbf{all} &
  \multicolumn{1}{c|}{0.70} &
  \multicolumn{1}{c|}{0.67} &
  \multicolumn{1}{c|}{0.59} &
  \multicolumn{1}{c|}{0.53} &
  \multicolumn{1}{c|}{0.51} &
  \multicolumn{1}{c|}{0.53} &
  \multicolumn{1}{c|}{0.46} &
  \multicolumn{1}{c|}{0.30} &
  \multicolumn{1}{c|}{0.17} &
  0.14 &
  \multicolumn{1}{c|}{0.66} &
  \multicolumn{1}{c|}{0.64} &
  \multicolumn{1}{c|}{0.55} &
  \multicolumn{1}{c|}{0.51} &
  \multicolumn{1}{c|}{0.49} &
  \multicolumn{1}{c|}{0.41} &
  \multicolumn{1}{c|}{0.35} &
  \multicolumn{1}{c|}{0.22} &
  \multicolumn{1}{c|}{0.14} &
  0.12 \\ \cline{3-23} 
 &
   &
  \textbf{80} &
  \multicolumn{1}{c|}{\begin{tabular}[c]{@{}c@{}}0.70\\ (+0.00)\end{tabular}} &
  \multicolumn{1}{c|}{\begin{tabular}[c]{@{}c@{}}0.68\\ (+0.01)\end{tabular}} &
  \multicolumn{1}{c|}{\begin{tabular}[c]{@{}c@{}}0.67\\ (+0.08)\end{tabular}} &
  \multicolumn{1}{c|}{\begin{tabular}[c]{@{}c@{}}0.64\\ (+0.11)\end{tabular}} &
  \multicolumn{1}{c|}{\begin{tabular}[c]{@{}c@{}}0.63\\ (+0.12)\end{tabular}} &
  \multicolumn{1}{c|}{\begin{tabular}[c]{@{}c@{}}0.53\\ (+0.00)\end{tabular}} &
  \multicolumn{1}{c|}{\begin{tabular}[c]{@{}c@{}}0.49\\ (+0.03)\end{tabular}} &
  \multicolumn{1}{c|}{\begin{tabular}[c]{@{}c@{}}0.44\\ (+0.14)\end{tabular}} &
  \multicolumn{1}{c|}{\begin{tabular}[c]{@{}c@{}}0.40\\ (+0.23)\end{tabular}} &
  \begin{tabular}[c]{@{}c@{}}0.35\\ (+0.21)\end{tabular} &
  \multicolumn{1}{c|}{\begin{tabular}[c]{@{}c@{}}0.69\\ (+0.03)\end{tabular}} &
  \multicolumn{1}{c|}{\begin{tabular}[c]{@{}c@{}}0.66\\ (+0.02)\end{tabular}} &
  \multicolumn{1}{c|}{\begin{tabular}[c]{@{}c@{}}0.64\\ (+0.09)\end{tabular}} &
  \multicolumn{1}{c|}{\begin{tabular}[c]{@{}c@{}}0.59\\ (+0.08)\end{tabular}} &
  \multicolumn{1}{c|}{\begin{tabular}[c]{@{}c@{}}0.60\\ (+0.11)\end{tabular}} &
  \multicolumn{1}{c|}{\begin{tabular}[c]{@{}c@{}}0.41\\ (+0.00)\end{tabular}} &
  \multicolumn{1}{c|}{\begin{tabular}[c]{@{}c@{}}0.38\\ (+0.03)\end{tabular}} &
  \multicolumn{1}{c|}{\begin{tabular}[c]{@{}c@{}}0.32\\ (+0.10)\end{tabular}} &
  \multicolumn{1}{c|}{\begin{tabular}[c]{@{}c@{}}0.29\\ (+0.15)\end{tabular}} &
  \begin{tabular}[c]{@{}c@{}}0.27\\ (+0.15)\end{tabular} \\ \cline{3-23} 
 &
   &
  \textbf{40} &
  \multicolumn{1}{c|}{\begin{tabular}[c]{@{}c@{}}0.70\\ (+0.00)\end{tabular}} &
  \multicolumn{1}{c|}{\begin{tabular}[c]{@{}c@{}}0.69\\ (+0.02)\end{tabular}} &
  \multicolumn{1}{c|}{\begin{tabular}[c]{@{}c@{}}0.67\\ (+0.08)\end{tabular}} &
  \multicolumn{1}{c|}{\begin{tabular}[c]{@{}c@{}}0.64\\ (+0.11)\end{tabular}} &
  \multicolumn{1}{c|}{\begin{tabular}[c]{@{}c@{}}0.63\\ (+0.12)\end{tabular}} &
  \multicolumn{1}{c|}{\begin{tabular}[c]{@{}c@{}}0.53\\ (+0.00)\end{tabular}} &
  \multicolumn{1}{c|}{\begin{tabular}[c]{@{}c@{}}0.49\\ (+0.03)\end{tabular}} &
  \multicolumn{1}{c|}{\begin{tabular}[c]{@{}c@{}}0.45\\ (+0.15)\end{tabular}} &
  \multicolumn{1}{c|}{\begin{tabular}[c]{@{}c@{}}0.42\\ (+0.25)\end{tabular}} &
  \begin{tabular}[c]{@{}c@{}}0.39\\ (+0.25)\end{tabular} &
  \multicolumn{1}{c|}{\begin{tabular}[c]{@{}c@{}}0.69\\ (+0.03)\end{tabular}} &
  \multicolumn{1}{c|}{\begin{tabular}[c]{@{}c@{}}0.67\\ (+0.03)\end{tabular}} &
  \multicolumn{1}{c|}{\begin{tabular}[c]{@{}c@{}}0.64\\ (+0.09)\end{tabular}} &
  \multicolumn{1}{c|}{\begin{tabular}[c]{@{}c@{}}0.60\\ (+0.09)\end{tabular}} &
  \multicolumn{1}{c|}{\begin{tabular}[c]{@{}c@{}}0.62\\ (+0.13)\end{tabular}} &
  \multicolumn{1}{c|}{\begin{tabular}[c]{@{}c@{}}0.42\\ (+0.01)\end{tabular}} &
  \multicolumn{1}{c|}{\begin{tabular}[c]{@{}c@{}}0.37\\ (+0.02)\end{tabular}} &
  \multicolumn{1}{c|}{\begin{tabular}[c]{@{}c@{}}0.34\\ (+0.12)\end{tabular}} &
  \multicolumn{1}{c|}{\begin{tabular}[c]{@{}c@{}}0.32\\ (+0.18)\end{tabular}} &
  \begin{tabular}[c]{@{}c@{}}0.29\\ (+0.17)\end{tabular} \\ \hline
\end{tabular}%
}
\caption{The effect of top-k sampling on privacy-utility tradeoff on Flan-T5(3b) and Stablelm-Tuned(7B) models}
\label{tab:top_k_sampling}
\end{table*}

\subsection{Effect of clipping logits}
\label{subsec:clipping}

In both sections  (Section \ref{subsec:baseline} and Section \ref{subsec:open-source}) DP-Prompt is run without logit clipping. This is primarily because ChatGPT doesn't expose logits, and for a fair comparison with ChatGPT, we didn't clip logits for open-source models in Section \ref{subsec:open-source}. However, the LDP guarantee still holds because, in practice, logits are always bounded within a certain precision, even though they may have large values.

In this section, we examine the impact of logit clipping on the tradeoff between privacy and utility using FlanT5(3b) and Stablelm Tuned(7b) models against embedding-level attacks. We adopt the approach of \cite{li2021differentially} by learning the clipping boundaries on additional data (more details can be found in Appendix \ref{sec:appendix_imp_details}). The results of this analysis are presented in Table \ref{tab:clipping}. The maximum difference observed with and without clipping occurs for Stablelm-Tuned(7b) on the IMDB dataset at a temperature of $0.75$. In this case, the sentiment F1 score drops from $0.71$ to $0.65$, and the author identification F1 score drops from $0.53$ to $0.45$. Based on these findings, we recommend using logit clipping when higher privacy is required and not using clipping when higher utility is desired.

\subsection{Effect of Top-K sampling}\label{subsec:top-k}

For the differential privacy guarantee to hold, sampling from probabilities must be done over the entire vocabulary according to score probabilities. While it is common to use top-k sampling in practice to improve generation (the default value in Hugging Face \cite{wolf2019huggingface} is 40). It is important to note that the ChatGPT chat completion API does not include a top-k parameter. In this section, using open-source models, we examine the impact of top-k sampling on utility and investigate whether it provides any empirical privacy, even in the \emph{absence} of the differential privacy guarantee. In addition, we aim to assess whether top-k sampling can effectively help open source models to recover the clean sentiment F1-score and narrow the gap compared to DP-Prompt when used with ChatGPT.

We consider Flan-T5(3b) and Stablelm-Tuned(7B) models and perform decoding with top-k sampling for $k=\{40, 80\}$. The results against embedding level attacks, including no top-k sampling, are shown in Table \ref{tab:top_k_sampling}. The maximum difference observed with and without clipping occurs for Stablelm-Tuned(0.75) on the IMDB dataset at a temperature of $0.7$ for $k=40$. In this case, the sentiment F1 score increases from $0.32$ to $0.64$, and the author identification F1 score increases from $0.05$ to $0.27$. Additionally, top-k sampling does not improve utility at a temperature of $0.75$, indicating that DP-Prompt with open-source models, even with top-k sampling, does not match DP-Prompt with ChatGPT.

\section{Issue of Data Memorization}
It is important to acknowledge the possibility that ChatGPT and other open-source language models (LLMs) may have been exposed to online review datasets such as IMDB and Yelp. This exposure is not unique to our approach. Similar questions can be raised for word-level approaches \cite{feyisetan2020privacy, xu2020differentially, carvalho2021tem} and sentence-level approaches \cite{meehan2022sentence} since GloVe embeddings \cite{pennington2014glove} and Sentence Transformers (BERT) \cite{reimers2019sentence} are pretrained on the Common Crawl dataset. Hence assumption in Theorem \ref{th:LDP-Garuntee} may not hold. We argue that this exposure does not pose a major concern for DP-Prompt due to the following reasons. 

\noindent\textbf{No availability of paraphrases of IMDB, Yelp:}  DP-Prompt essentially evaluates the performance of LLMs in the task of zero-shot paraphrasing at higher temperatures. Although the language models may have been fine-tuned using open annotated paraphrase datasets \cite{zhang2019paws}, it's important to note that there is no specific annotated paraphrasing data available for the IMDB and Yelp datasets. Therefore, the language models used in DP-Prompt are not explicitly fine-tuned for paraphrasing tasks related to these datasets. Consequently, the results obtained in Section \ref{sec:experiments} are likely to generalize to a large extent.

\noindent\textbf{Robustness of Evaluation Setup:}  Our evaluation methodology goes beyond relying solely on the Sentiment F1 score (Utility) or even comparing sentiment F1 score (Utility) with $\epsilon$ (theoretical privacy) as presented in \cite{meehan2022sentence}. Instead, we adopt a comprehensive approach that considers the merit of our proposed approach based on the sentiment F1 score in conjunction with empirical privacy demonstration against de-anonymization attacks. Simply copying reviews verbatim would yield a high author identification F1 score, failing to provide empirical privacy. Additionally, we consider not only static attack models but also stronger ones like adaptive attackers. Superficially altering reviews may deceive static attackers but would likely fail against adaptive attackers.

\section{Related Work}

The previous work on releasing private documents can be categorized into three approaches based on the level at which noise is added (See Table \ref{tab:summary} for concise summary). These approaches are:

\noindent \textbf{Word-level Approaches}: 
MadLib  \cite{feyisetan2020privacy} is a word-level mechanism that applies Laplace noise to word embeddings and maps them to the nearest word in the vocabulary, demonstrating the differential privacy (DP) guarantee of MadLib under the Euclidean metric. An extension of this approach involves using a regularized Mahalanobis metric instead \cite{xu2020differentially}. In contrast, the TEM mechanism utilizes the exponential mechanism to transform the privatization step into a selection problem \cite{carvalho2021tem}. Furthermore, there is a recent development known as CusText \cite{chen2023customized}, which focuses on developing customized mapping mechanisms for each individual word in the vocabulary \cite{chen2023customized}. All of these approaches are word-level mechanisms and have been shown to have significant limitations, such as their disregard for contextual information \cite{mattern2022limits}.



\noindent \textbf{Sentence-level Approaches}: Sentence-level mechanisms based on Sentence Transformer \cite{reimers2019sentence} were introduced in \cite{meehan2022sentence}. They proposed two approaches: one approach where noise is added to sentence embeddings, and another more complicated approach based on maximizing Tukey depth \cite{tukey1975mathematics, gilad-bachrach2012the}.

\noindent \textbf{Document-level Approaches}: A document-level Local Differential Privacy (LDP) mechanism was introduced, where GPT-2 is fine-tuned for a paraphrasing task \cite{mattern2022limits}. Our approach, DP-Prompt, draws inspiration from their work, but instead of resource-intensive fine-tuning, we use a zero-shot approach with pretrained models for efficient and effective generation of sanitized documents. Furthermore, the recently proposed DP-BART \cite{igamberdiev-habernal-2023-dp} employs BART \cite{lewis2020bart}, an encoder-decoder model. In DP-BART, noise is added to the encoder's output, and the decoder is fine-tuned to adapt to this noisy encoder output.

\noindent \textbf{Adversarial Methods:} Parallel to differentially private approaches, other techniques have been proposed that utilize Adversarial Learning \cite{shetty2018a4nt, quiring2019misleading} and Data Poisoning \cite{wang2022upton, jin2020bert}. However, these methods typically require access to a surrogate classifier. In contrast, our method is zero-shot, requiring neither fine-tuning nor access to a classifier.


\noindent \textbf{Differentially Private Training/Fine Tuning:} There is extensive research on differentially private training or fine-tuning of language models \cite{kerrigan2020differentially, li2021large, yu2021differentially, anil2022large, mattern2022differentially}. They aim to make language models resistant to various kinds of data leakage attacks \cite{carlini2019secret, carlini2021extracting, deng2021tag, balunoviclamp}. It is important to emphasize that this line of work is completely distinct from our own, as it focuses on training language models on private data, while our goal is to generate sanitized documents from private documents using pretrained language models.

\section{Conclusion}

This paper introduces DP-Prompt, a locally differentially mechanism called DP-Prompt that generates sanitized versions of private documents by prompting large language models to generate paraphrases. Notably, our method offers a simpler approach compared to existing methods. Through extensive experiments, we show that our approach achieves significantly improved utility compared to current methods for any required level of privacy. As the demand for on-device large language models (LLMs) continues to grow, our method emerges as a reliable safeguard for users' privacy and provides robust defense against de-anonymization attacks.

\section{Limitations}

In our study, we explored the initial step of harnessing large language models and zero-shot prompting to generate sanitized documents. While this approach effectively conceals the specific writing style of authors, there is still a potential risk of revealing explicit personal information, such as zip codes, bank details, or gender, especially when naively prompting a language model. This risk is particularly relevant in alternative text formats like messages or emails compared to online reviews.

For future work, an important direction would be to define a set of sensitive attributes and directly prompt the language model to replace these attributes with the identifier "X" while generating paraphrases. This approach would help improve the safeguarding of personal information. Additionally, it would be worthwhile to investigate the potential side effects of hallucination and the impact of different prompt templates on the generation of paraphrases, specifically within the context of the privacy-utility tradeoff. Additionally, more robust attacks that measure privacy leakage at the text level should be explored.



\bibliography{emnlp2023}
\bibliographystyle{acl_natbib}

\appendix

\section{Proof of Privacy}
\label{sec:appendix_proof}

\begin{theorem}
    \label{th:appLDP-Garuntee}
   Suppose the language model has not been pretrained on the private documents distribution $\mathcal{D}$. If the logits $\textbf{u} \in \mathbb{R}^{\abs{\mathcal{V}}}$ before the final softmax layer satisfy the condition $b_1 \leq u_i \leq b_2, \forall i \in [\abs{\mathcal{V}}]  $, and the \textup{DP-Prompt} run  with a temperature $T$ for generating $n$ tokens, then it can be proven that the generated output satisfies $(n(b_2-b_1)/T)$-\textup{LDP}.
\end{theorem}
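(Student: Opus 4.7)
The plan is to reduce each token sampling step to an instance of the exponential mechanism, bound the per-step privacy loss from the logit range, and then combine the steps by sequential composition. Concretely, sampling a token $v_j$ with probability proportional to $\exp(u_j/T)$ is exactly the exponential mechanism with utility function $u(\textup{D}, v) = u_v$ equal to the logit for token $v$ conditional on the context built from $\textup{D}$ and prompt template $\textup{T}$. The global bound $b_1 \le u_i \le b_2$ plays the role of a sensitivity bound on the utility, uniformly over the vocabulary and over neighboring inputs.

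The first step of the proof is to verify the single-token guarantee. For neighboring documents $\textup{D}, \textup{D}'$ with logit vectors $\mathbf{u}, \mathbf{u}'$ (both coordinate-wise in $[b_1, b_2]$), I would write
\begin{equation*}
\frac{\Pr[\text{token}=v \mid \textup{D}]}{\Pr[\text{token}=v \mid \textup{D}']}
= \frac{\exp(u_v/T)}{\exp(u'_v/T)} \cdot \frac{\sum_j \exp(u'_j/T)}{\sum_j \exp(u_j/T)},
\end{equation*}
and bound each of the two factors separately by $\exp((b_2-b_1)/T)$. The first factor is immediate from $u_v, u'_v \in [b_1, b_2]$; the second follows by bounding the numerator term-by-term from above using $u'_j \le b_2$ and the denominator from below using $u_j \ge b_1$. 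This yields an $\bigl(2(b_2-b_1)/T\bigr)$-LDP guarantee for a single token step, matching the statement in the main text.

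The second step is to compose across the $n$ token generations. Each subsequent step produces a logit vector that depends on both the private document and the previously sampled tokens. I would handle this by a standard chain-rule argument: writing the joint output probability as a product of conditional per-step probabilities, the bound above applies to each factor uniformly over the history, because the per-step bound depends only on the logit range $[b_1,b_2]$, not on the context. Multiplying $n$ such bounds gives an overall ratio at most $\exp\bigl(2n(b_2-b_1)/T\bigr)$ between the output distributions under $\textup{D}$ and $\textup{D}'$, which is the LDP definition applied to the full paraphrase. The assumption that the language model was not pretrained on $\mathcal{D}$ is invoked to justify treating the model weights as data-independent, so that the logit function itself does not introduce an additional $\textup{D}$-dependent leakage channel beyond what is captured by $[b_1,b_2]$.

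The main obstacle I anticipate is making the composition rigorous in the presence of the growing, autoregressive context: the output $v_i$ becomes part of the conditioning for $v_{i+1}$, so one must be careful that the per-step bound is a pointwise bound on conditional probabilities (which it is, by the argument above) rather than a bound only in expectation. Once that pointwise property is established, the product over $i=1,\dots,n$ telescopes cleanly and no advanced composition is required. A secondary subtlety is that the stated appendix bound $n(b_2-b_1)/T$ differs by a factor of two from what this direct argument yields; I would either recover the tighter bound by exploiting a monotonicity / range-halving argument on the normalizer (as in some refined analyses of the exponential mechanism) or simply report the $2n(b_2-b_1)/T$ bound consistent with the main-text statement of Theorem~\ref{th:LDP-Garuntee}.
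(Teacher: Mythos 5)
Your proposal follows essentially the same route as the paper's own proof: it bounds the per-token probability ratio by splitting it into the factor $\exp\left((u_k-u_k')/T\right)$ and the ratio of softmax normalizers, each at most $\exp\left((b_2-b_1)/T\right)$, and then applies sequential composition over the $n$ autoregressive steps to obtain $2n(b_2-b_1)/T$. You also correctly flagged the factor-of-two issue: the appendix statement reads $n(b_2-b_1)/T$, but the paper's own proof (and the main-text Theorem \ref{th:LDP-Garuntee}) actually establishes $2n(b_2-b_1)/T$, so reporting the latter is the right call.
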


\begin{proof}
   Let $\mathcal{D}$ and $\mathcal{D}'$ be any two documents, and $\textbf{u}$ and $\textbf{u}' \in \mathbb{R}^{\abs{\mathcal{V}}}$ be their corresponding logits. Let $v \in \mathcal{V}$, and $k$ be its index, with $u_{k}$ being its corresponding logit. We then have that,
    \begin{align*}
        \frac{\textup{Pr}[\mathcal{M} (\textup{D}) = v] }{\textup{Pr}[\mathcal{M}(\textup{D}') = v]} &= \dfrac{\frac{\exp(\frac{u_{k}}{T})}{\sum_{j=1}^{\abs{\mathcal{V}}} \exp(\frac{u_{j}}{T})}}{\frac{\exp(\frac{u_{k}'}{T})}{\sum_{j=1}^{\abs{\mathcal{V}}} \exp(\frac{u_{j}'}{T})}} \\
        &= \exp{\left(\frac{u_k - u_k'}{T}\right)} \frac{\sum_{j=1}^{\abs{\mathcal{V}}} \exp(\frac{u_{j}'}{T})}{\sum_{j=1}^{\abs{\mathcal{V}}} \exp(\frac{u_{j}}{T})} \\
        &\leq \exp{\left(\frac{b_2-b1}{T}\right)} \exp{\left(\frac{b_2-b1}{T}\right)}  \\
        & \leq \exp{\left(2(b_2-b_1)/T\right)}.
    \end{align*}
    Now by using sequential composition law of DP \cite{dwork2006calibrating}, we can set $\epsilon = \left(2n(b_2-b_1)/T\right)$ to conclude the proof. 
\end{proof}

\section{Code}
\label{sec:appendix_code}

In this section, we showcase the code for DP-Prompt implemented using the Hugging Face library \cite{wolf2019huggingface}. The code can be found in Listing \ref{lst:dp-prompt-code}.

\begin{figure*}[t]
\begin{lstlisting}[caption={DP Prompt Code using Hugging Face's Transformers Library},label={lst:dp-prompt-code}, numbers=none]
from transformers import LogitsProcessor, LogitsProcessorList

class ClippedLogitsProcessor(LogitsProcessor):
    def __init__(self, min_tensor, max_tensor):
        self.min_tensor = min_tensor
        self.max_tensor = max_tensor

    def __call__(self, input_ids, scores):
        clipped_logits = torch.clamp(scores, self.min_tensor, self.max_tensor)
        return clipped_logits

def prompt_template_fn(private_doc):
    prompt = f"Document: {private_doc}\nParaphrase of the document:"
    return prompt
    
def dp_prompt(
    private_doc, model, tokenizer, min_tensor, max_tensor, temp, new_tokens
):
    logits_processor = LogitsProcessorList(
        [ClippedLogitsProcessor(min_tensor, max_tensor)]
    )
    private_prompt = prompt_template_fn(private_doc)
    input_ids = tokenizer.encode(private_prompt, return_tensors="pt")
    output = model.generate(
        input_ids,
        do_sample=True,
        top_k=0,
        top_p=1.0,
        temperature=temp,
        max_new_tokens=new_tokens,
        logits_processor=logits_processor
    )
    sanitized_doc = tokenizer.decode(output[0][0], skip_special_tokens=True)
    return sanitized_doc
\end{lstlisting}
\end{figure*}

\section{Implementation Details and Additional Experiments}
\label{sec:appendix_imp_details}

\subsection{Word-Level Mechanisms}

For all word-level mechanisms, we utilize 50-dimensional glove embeddings \cite{pennington2014glove}, following the approach of Mattern et al. \cite{mattern2022limits}. The projection step, which requires approximate nearest neighbor search, is performed using an Annoy indexer with 500 trees.

\subsection{Sentence-Level Mechanisms}

Both the Truncated-Laplace and Deep-Candidate mechanisms require additional publicly available data. In the case of Truncated-Laplace, this data is used to determine truncated boundaries, while for Deep-Candidate, it is used to train the sentence recoder and obtain the output.

We randomly sample 5,000 documents from both the IMDB and Yelp reviews, which are not part of the data used for privacy-utility experiments.

The sentence recoder architecture is trained with three layers of MLP, incorporating dropout and selecting the best model. We choose 50 clusters for sentence recoding and employ 100 random projections to estimate the approximate Tukey depth.

Sentence embeddings \cite{reimers2019sentence} of dimension 768 are obtained from \cite{song2020mpnet}.

\subsection{Document-Level Mechanisms}

For the paraphrasing mechanism, we fine-tune the gpt2-xl(1.5b) parameter model on the PAWS dataset \cite{zhang2019paws}. The training set is constructed by combining the train and val sets, and the test set is used for validation to save the best model. The training set consists of 25,368 examples, and the validation set consists of 3,536 examples. We follow the procedure outlined in \cite{mattern2022limits}, which builds upon \cite{witteveen2019paraphrasing}.

To set clip thresholds in Section \ref{subsec:clipping}, we employ a process similar to Truncated-Laplace \cite{meehan2022sentence} with a slight difference. While \cite{meehan2022sentence} calculates the $75\%$ quantile and utilizes it as the clipping threshold, we calculate the minimum and maximum values.Both Truncated-Laplace and Dp-Prompt employ the exact same additional data. The resulting min and max clip threshold vector is then used to clip the logits before scaling them by temperature.

\subsection{Static and Adaptive Attacker Architecture}

 Word level and document level output documents, to simulate embedding-level attacker  we employ sentence transformer \texttt{all-mpnet-base-v2} \cite{reimers2019sentence, song2020mpnet} to convert sanitized document to sanitized embedding. For sentence level, directly sanitized embeddings are used. The embedding-level attackers employ a three-layer MLP with a hidden dimension of 768.  We use the ReLU activation function and incorporate dropout of 0.5. The models are trained for 50 epochs with a batch size of 32, using the Adam optimizer with a StepLR learning scheduler. The initial learning rate is set to $10^{-3}$, and the gamma value is 0.95. 

The text-level attackers use \texttt{bert-base-cased} and fine-tune it for 3 epochs with a batch size of 16, using the AdamW optimizer with a linear scheduler and a starting learning rate of $5 \times 10^{-5}.$

\subsection{Code and Reproducibility}

The code will be publicly released. Considering the reproducibility challenges associated with closed APIs \cite{pozzobon2023challenges}, we also plan to release the paraphrased documents that were generated using ChatGPT.

.

\begin{table*}[htbpt!]
\centering
\resizebox{1.3\columnwidth}{!}{%
\begin{tabular}{ccccccccccc}
\hline
\multicolumn{11}{c}{\textbf{BERT Score}}                                                                                                                  \\ \hline
\multicolumn{1}{c|}{}                                & \multicolumn{5}{c|}{\textbf{IMDB}}                         & \multicolumn{5}{c}{\textbf{Yelp}}     \\ \hline
\multicolumn{1}{c|}{} &
  \textbf{t=1.0} &
  \textbf{t=1.25} &
  \textbf{t=1.5} &
  \textbf{t=1.75} &
  \multicolumn{1}{c|}{\textbf{t=2.0}} &
  \textbf{t=1.0} &
  \textbf{t=1.25} &
  \textbf{t=1.5} &
  \textbf{t=1.75} &
  \textbf{t=2.0} \\  \hline 
\multicolumn{1}{c|}{\textbf{Chat GPT}} & 0.882 & 0.879 & 0.863 & 0.805 & \multicolumn{1}{c|}{0.765} & 0.89 & 0.887 & 0.876 & 0.83 & 0.777 \\
\end{tabular}%
}
\caption{Bert Score for ChatGPT for different sampling temperatures}
\label{tab:chatgpt_bert_score}
\end{table*}

\begin{table*}[htbp!]
\centering
\resizebox{1.4\columnwidth}{!}{%
\begin{tabular}{ccccccccccc}
\hline
\multicolumn{11}{c}{\textbf{BERT Score}}                                                                                                                  \\ \hline
\multicolumn{1}{c|}{}                                & \multicolumn{5}{c|}{\textbf{IMDB}}                         & \multicolumn{5}{c}{\textbf{Yelp}}     \\ \hline
\multicolumn{1}{c|}{} &
  \textbf{t=0.75} &
  \textbf{t=1.0} &
  \textbf{t=1.25} &
  \textbf{t=1.5} &
  \multicolumn{1}{c|}{\textbf{t=1.75}} &
  \textbf{t=0.75} &
  \textbf{t=1.0} &
  \textbf{t=1.25} &
  \textbf{t=1.5} &
  \textbf{t=1.75} \\ \hline
\multicolumn{1}{c|}{\textbf{GPT-2 (xl) (fine tuned)}} & 0.838 & 0.822 & 0.794 & 0.762 & \multicolumn{1}{c|}{0.748} & 0.852 & 0.837 & 0.804 & 0.765 & 0.750 \\
\multicolumn{1}{c|}{\textbf{T5 (xl)}}                & 0.831 & 0.812 & 0.790 & 0.775 & \multicolumn{1}{c|}{0.763} & 0.834 & 0.817 & 0.797 & 0.782 & 0.769 \\
\multicolumn{1}{c|}{\textbf{Stablelm-Base (3b)}}     & 0.814 & 0.805 & 0.785 & 0.762 & \multicolumn{1}{c|}{0.749} & 0.835 & 0.817 & 0.789 & 0.757 & 0.747 \\
\multicolumn{1}{c|}{\textbf{Stablelm-Base (7b)}}     & 0.813 & 0.803 & 0.779 & 0.759 & \multicolumn{1}{c|}{0.748} & 0.835 & 0.819 & 0.788 & 0.757 & 0.746 \\
\multicolumn{1}{c|}{\textbf{Flan T5 (xl)}}           & 0.843 & 0.823 & 0.792 & 0.762 & \multicolumn{1}{c|}{0.750} & 0.849 & 0.830 & 0.801 & 0.769 & 0.753 \\
\multicolumn{1}{c|}{\textbf{Stablelm Tuned (3b)}}    & 0.846 & 0.830 & 0.795 & 0.757 & \multicolumn{1}{c|}{0.743} & 0.849 & 0.836 & 0.800 & 0.760 & 0.746 \\
\multicolumn{1}{c|}{\textbf{Stablelm Tuned (7b)}}     & 0.854 & 0.839 & 0.800 & 0.757 & \multicolumn{1}{c|}{0.743} & 0.858 & 0.845 & 0.806 & 0.761 & 0.747
\end{tabular}%
}
\caption{Bert Score for Open Source Models for different sampling temperatures}
\label{tab:small_bert_score}
\end{table*}

\subsection{Expected F1 score of random classifier} \label{subsec:random_classifier}

Let $p_i$ represent the fraction of documents with label $y_i$, where $i$ ranges from $0$ to $\ell-1$. A uniformly random classifier predicts class $y_i$ with a probability of $1/\ell$. Based on this setup, we can derive the following metrics:

\begin{align*}
    \text{True Positives(TP)} &= \frac{p_i}{\ell} \\ 
    \text{True Negatives(TN)} &= \frac{(1-p_i) (\ell -1)}{\ell} \\ 
    \text{False Positives(FP)} &= \frac{1-p_i}{\ell}  \\ 
    \text{False Negatives(FN)} &=  \frac{p_i (\ell -1)}{\ell} 
\end{align*}

From these metrics, we can calculate the F1 Score for sentiment analysis, which is a binary classification task, as follows:

\begin{align*}
    \text{F1 Score} &= \frac{\text{TP}}{\text{TP} + \frac{1}{2} \left(\text{FP} + \text{FN}\right)}= \frac{2p_1}{1 + p_{1} \ell}
\end{align*}

For author identification, which is a multi-class classification task, we utilize the F1 Score with a macro average. In the case of a random classifier, the expected F1 score can be calculated as:

\begin{align*}
    \text{F1 Score}_{\text{macro avg}} &= \frac{1}{\ell} \sum_{i=0}^{\ell-1} \frac{2p_i}{1 + p_{i} \ell}
\end{align*}

By employing this approach, we can effectively evaluate the performance of a random classifier for author identification in terms of the F1 Score.

\section{Additional Results}

\subsection{Comparing 50 and 300-Dimensional Glove Embeddings for Word-Level Mechanism}

Note that in Section \ref{subsec:baseline}, we used 50-dimensional Glove embeddings (\texttt{glove-wiki-gigaword-50}) \cite{pennington2014glove} for Madlib, Mahalanobis, and TEM mechanisms.  In this section, we demonstrate that there is no significant benefit to using 300-dimensional glove embeddings (\texttt{glove-wiki-gigaword-300}). We show this for IMDB dataset against embedding-level attackers. Results are show in Figure \ref{fig:embedding-300-50}.

\begin{figure}[t!]
  \centering
  \begin{subfigure}[b]{0.48\linewidth}
    \centering
    \includegraphics[width=\linewidth]{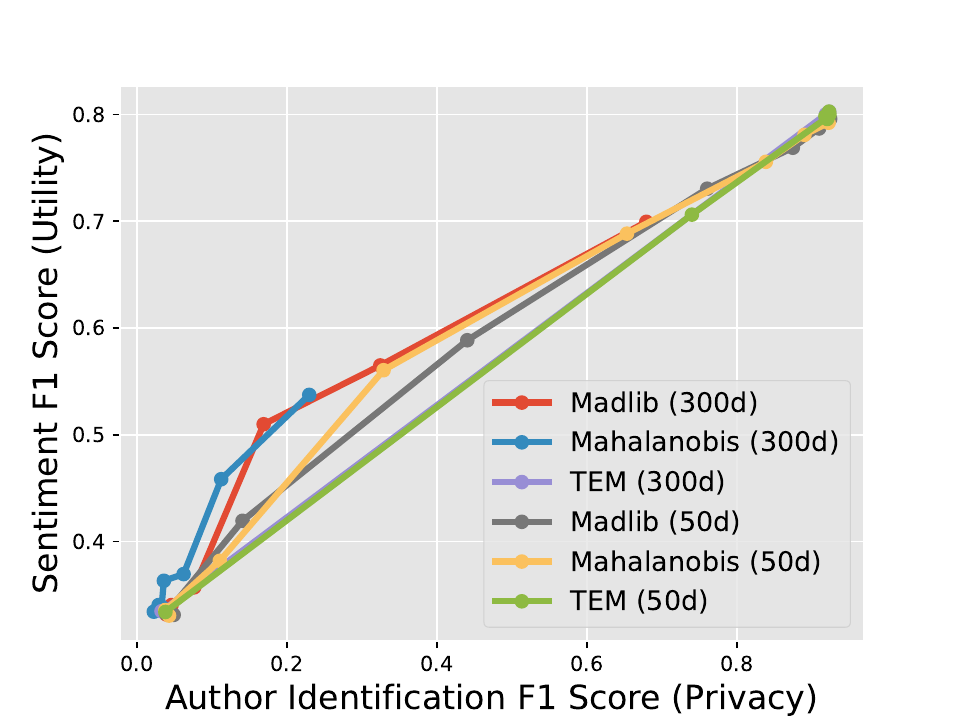}
    \subcaption{IMDB (static)}
    \label{fig:embedding_300vs50_imdb_static}
  \end{subfigure}
  \begin{subfigure}[b]{0.48\linewidth}
    \centering
    \includegraphics[width=\linewidth]{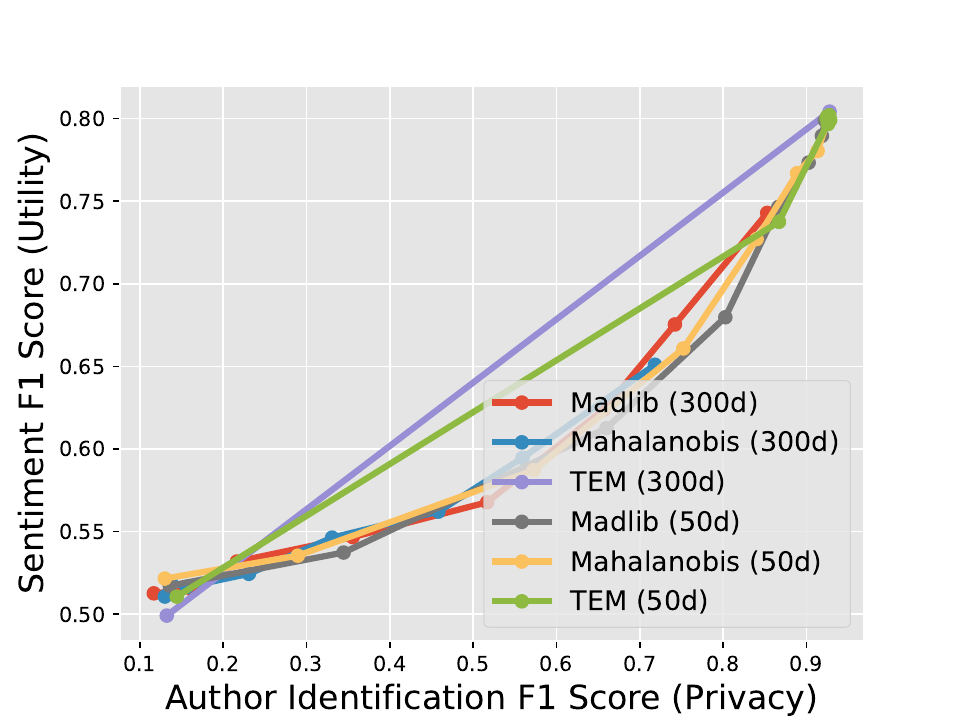}
    \subcaption{IMDB (adaptive)}
    \label{fig:embedding_300vs50_imdb_adaptive}
  \end{subfigure}
  
  \caption{Comparision of \texttt{glove-wiki-gigaword-50} with \texttt{glove-wiki-gigaword-300} for word level mechanism (Madlib, Mahalonobis, TEM) for IMDB dataset against embedding level attackers.}
  \label{fig:embedding-300-50}
\end{figure}

\subsection{BERT score for paraphrasing models}
We use BERTScore, which has been demonstrated to correlate with human judgments, in conjunction with the RoBerta model \cite{liu2019roberta} (\texttt{roberta-large}) to assess the similarity between a review and its paraphrase. The results for ChatGPT are presented in Table \ref{tab:chatgpt_bert_score}, while those for the open-source model can be found in Table \ref{tab:small_bert_score}.

\section{Sample Illustration of paraphrases}

In this section, we present a comparison of paraphrases generated from ChatGPT and Stablelm at various temperatures, as illustrated in Figure \ref{fig:chat_gpt_vs_stablelm_tuned}. It is evident from the results that ChatGPT produces higher-quality paraphrases compared to Stablelm-Tuned (7B).

\begin{figure*}
\includegraphics[width=\textwidth]{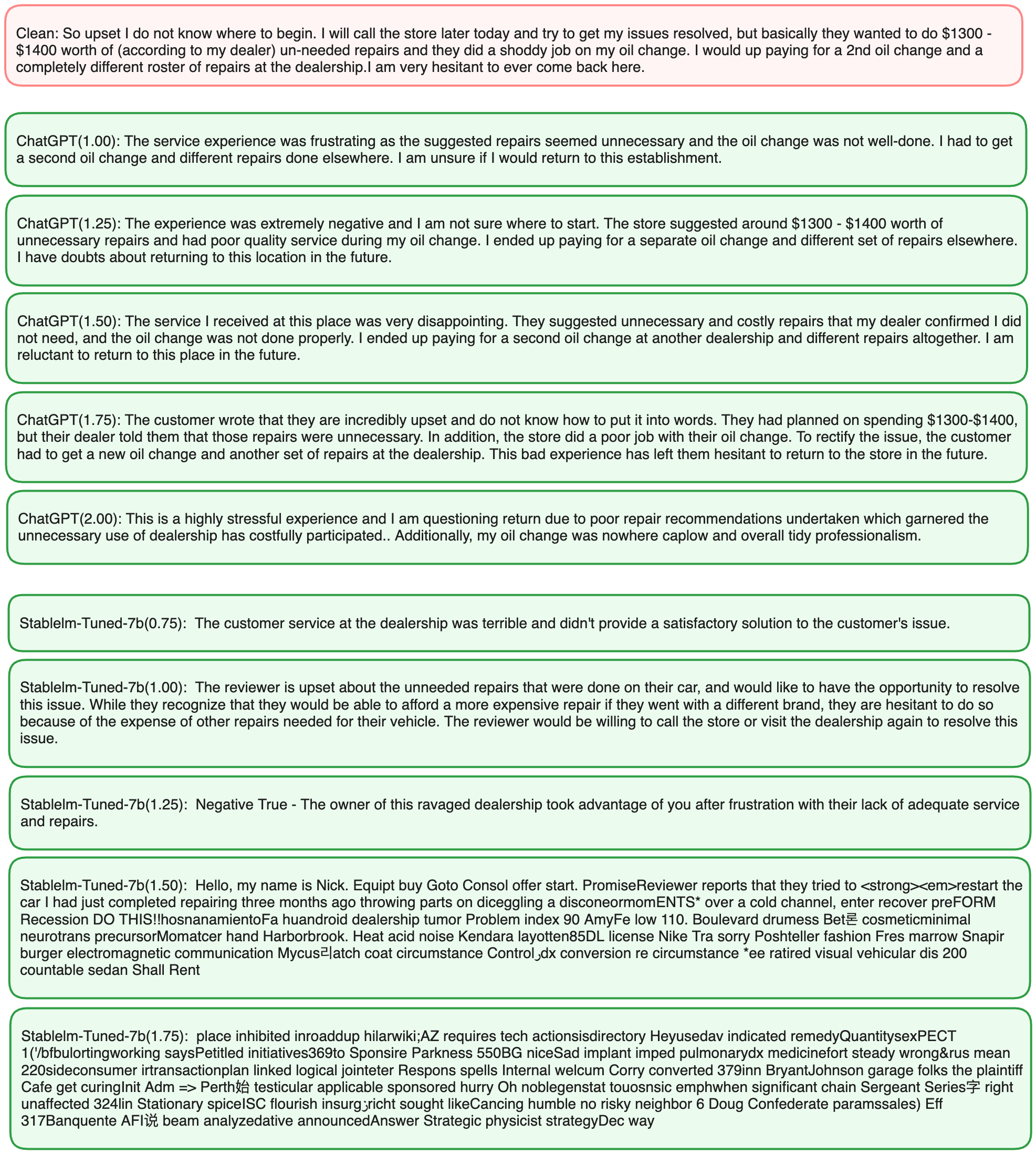}
\caption{Comparison of paraphrase results between ChatGPT (gpt-3.5) and Stablelm-Tuned-7B at different temperatures. ChatGPT consistently generates more readable and superior quality text at higher temperatures compared to Stablelm-Tuned-7B.}
\label{fig:chat_gpt_vs_stablelm_tuned}
\end{figure*}
\end{document}